\documentclass[10pt]{article}

\usepackage[margin=1in]{geometry}
\usepackage{comment}
\usepackage{algorithm}
\usepackage{algpseudocode}
\newcommand{\E}{\mathbb{E}}
\newcommand{\Var}{\textnormal{Var}}

\newcommand{\OPTc}{\mathrm{OPT}_{\mathrm s}}
\newcommand{\OPTd}{\mathrm{OPT}_{\mathrm d}}

\usepackage{algorithm, algorithmicx, algpseudocode}
\usepackage{amssymb}
\usepackage{amsmath}
\usepackage{graphicx}
\usepackage{mathtools}
\usepackage{needspace} 
\usepackage{multicol}
\usepackage{listings}
\usepackage{amsthm}
\usepackage{thmtools} 
\usepackage[shortlabels]{enumitem}
\usepackage[italicdiff]{physics}
\usepackage{cancel}
\usepackage[dvipsnames]{xcolor}
\usepackage{verbatim}
\usepackage{comment}
\usepackage{array}
\usepackage{tikz-cd}
\usepackage{import}
\usepackage{booktabs}
\usepackage{comment}
\usepackage{array}
\usepackage[colorlinks, linkcolor=blue, citecolor=blue, linktoc=page]{hyperref}
\usepackage{booktabs}
\PassOptionsToPackage{table}{xcolor}
\usepackage{array}
\usepackage{adjustbox}
\usepackage{tikz}
\usetikzlibrary{patterns.meta}

\newcommand{\ourresult}[1]{%
	\tikz[baseline=(X.base)]{
		\node[
		rounded corners=1pt,
		fill=blue!6,
		pattern={Lines[angle=45,distance=2.4pt,line width=0.12pt]},
		pattern color=blue!45,
		inner xsep=3pt,
		inner ysep=1.4pt,
		minimum height=3.2ex,
		text width=4.15cm,
		align=center
		] (X) {$\displaystyle #1$};
	}%
}
\usepackage{forest}
\usepackage{float}
\usepackage{xspace}
\usepackage{aliascnt}
\usepackage{wrapfig}
\usepackage{thm-restate}
\usepackage{adjustbox}
\usepackage{bm}
\usepackage{bbm}
\usepackage{makecell}
\usepackage{nicefrac}
\usepackage[utf8]{inputenc} 
\usepackage[T1]{fontenc}    
\usepackage{hyperref}       
\usepackage{url}            
\usepackage{booktabs}       
\usepackage{amsfonts}       
\usepackage{nicefrac}       
\usepackage{microtype}      
\usepackage{thm-restate}
\usepackage{cleveref}

\newtheorem{theorem}{Theorem}
\numberwithin{theorem}{section}
\newtheorem*{theorem*}{Theorem}

\newtheorem{corollary}[theorem]{Corollary}

\theoremstyle{definition}

\newfloat{protocol}{tbp}{lop}
\floatname{protocol}{Protocol}
\allowdisplaybreaks

\usepackage{tikz}

\newcommand{\circled}[1]{%
	\tikz[baseline=(char.base)]{
		\node[shape=circle,draw,inner sep=2pt,line width=0.4pt] (char) {$#1$};
	}%
}

\usepackage{natbib}

\usepackage{caption}
\makeatletter
\renewcommand\thanks[1]{%
  \footnotemark%
  \protected@xdef\@thanks{\@thanks
    \protect\footnotetext[\the\c@footnote]{#1}}%
}
\makeatother

\title{Multi-Armed Bandits With Best-Action Queries}
\author{
	Francesco Bacchiocchi\thanks{Alphabetical order.}\\[-0.2em]
	\texttt{francesco.bacchiocchi@polimi.it} \\[-0.2em]
	Politecnico di Milano 
	\and
	Matteo Castiglioni\footnotemark[1] \\[-0.2em]
	\texttt{matteo.castiglioni@polimi.it} \\[-0.2em]
	 Politecnico di Milano 
	\and
	Alberto Marchesi\footnotemark[1] \\[-0.2em]
	\texttt{alberto.marchesi@polimi.it} \\[-0.2em]
	Politecnico di Milano
	\and
	Francesco Emanuele Stradi\footnotemark[1]\\[-0.2em]
	\texttt{francescoemanuele.stradi@polimi.it} \\[-0.2em]
	Politecnico di Milano 
}
\date{\today}

\begin{document}

\maketitle

\begin{abstract}
	We study \emph{multi-armed bandits} (MABs) augmented with \emph{best-action queries}, in which the learner may additionally query an oracle that reveals the best arm in the current round. This setting was recently characterized by~\citet{russo2024online} in the \emph{full-feedback} model, where the learner observes the rewards of all arms after each round. They show that, in both \emph{stochastic} and \emph{adversarial} environments, $k$ best-action queries reduce the optimal $\widetilde{\mathcal{O}}(\sqrt{T})$ regret to $\widetilde{\mathcal{O}}(\min\{T/k,\sqrt{T}\})$. Whether this improvement extends to the more realistic \emph{bandit-feedback} model---where the learner observes only the reward of the played arm---was left as an open problem. 
	We fully resolve this question. When rewards are stochastic but correlated among arms, we show that the full-feedback result does not carry over: 
	any algorithm must incur regret at least $\Omega(\sqrt{T-k})$. This lower bound directly extends to adversarial environments. On the positive side, we show that $\widetilde{\mathcal{O}}(\min\{T/k,\sqrt{T-k}\})$ regret is still achievable when rewards are stochastic and i.i.d., and establish a matching lower bound, up to logarithmic factors. Together, these results provide a complete characterization of the benefits of \emph{best-action queries} in the \emph{bandit-feedback} model.
\end{abstract}
\newpage

\tableofcontents

\newpage
\section{Introduction}

\emph{Multi-armed bandits} (MABs) constitute a canonical framework for sequential decision-making under uncertainty~\citep{auer2002finite,cesa2006prediction,noauthororeditor}. In a MAB problem, the learner repeatedly selects one among $n$ actions (arms) over a horizon of $T$ rounds, observing only the reward of the played arm. The goal of the learner is to maximize the cumulative reward collected during learning, or equivalently, to minimize the \emph{regret} with respect to the best fixed arm in hindsight. MABs capture a fundamental exploration-exploitation tradeoff and have thus become a standard model for applications such as recommendation systems~\citep{li2010contextual}, online experimentation~\citep{villar2015multi}, and digital advertising~\citep{chapelle2011empirical}.

Motivated by the recent growing literature on algorithms with machine learning-based predictions (see, \emph{e.g.}, the survey~\citep{mitzenmacher2022algorithms}), we study MABs augmented with \emph{best-action queries}~\citep{russo2024online}, in which the learner may query an oracle in at most \( k \) rounds, where the oracle reveals the best arm in the current round.
This models situations in which highly reliable information about the instantaneous optimal choice is available, but only through an expensive channel, such as expert intervention, high-fidelity simulation, offline evaluation, or a slower and more accurate decision pipeline. These high-quality interventions are typically too costly to invoke at every round because of monetary, computational, or latency constraints, and must therefore be used sparingly. From this perspective, best-action queries provide a clean abstraction for studying the value of \emph{rare but highly informative feedback} in online learning.
A concrete example is an online platform that continuously moderates user-generated content (\emph{e.g.}, Meta or Google). When a new post is created, the platform must decide whether to flag the content as harmful. The platform has two options: (i) relying on reviews by human experts (\emph{i.e.}, best-action queries), or (ii) invoking a learning algorithm that makes automated decisions. Clearly, due to budget constraints, human review is a scarce resource, and thus the platform can choose option (i) only a limited number of times.

\citet{russo2024online} fully characterize the impact of best-action queries in \emph{full-feedback} settings, where after each round the learner observes the rewards of all arms. They show that $k$ best-action queries improve the optimal regret rate to $\widetilde{\mathcal{O}}(\min\{T/k,\sqrt{(T-k)}\})$ in both stochastic and adversarial settings.\footnote{In this work, we use $\widetilde{\mathcal{O}}$ to hide logarithmic factors.}
It is quite surprising that a sublinear number of $T^\alpha$ queries, $\alpha\in (1/2,1)$, is sufficient to drop the regret to $\widetilde{\mathcal{O}}(T^{1-\alpha})$. For instance, $T^{2/3}$ queries are sufficient to drop the regret to $\widetilde{\mathcal{O}}(T^{1/3})$.
Whether similar results can be recovered under the more realistic \emph{bandit-feedback} settings---where only the reward of the played arm is observed---were left open by~\citet{russo2024online}. Understanding whether the same improvements can still be achieved under bandit feedback is substantially more challenging, since the learner can no longer rely on observing the outcomes of unplayed actions, and the informational value of a query must be assessed in a  partial-feedback regime. In this work, we address the following research question:
\begin{center}
	\emph{Can best-action queries improve the performance of an online learner\\ when only bandit feedback is available?}
\end{center}

We fully resolve this question. First, we establish a negative result: any algorithm with access to $k$ best-action queries must incur $\Omega(\sqrt{T-k})$ when reward are stochastic but \emph{correlated} accross arms. This lower bound naturally extends to \emph{adversarial} environments. Thus, in such settings, best-action queries cannot substantially improve the learner's performance.
To prove this, we construct two correlated instances that are not only difficult to distinguish but also limit the ``query power'' by simultaneously bounding the negative regret and the information gained about the optimal action. We also note that our result is tight: a simple $\widetilde{\mathcal{O}}(\sqrt{n(T-k)})$ upper bound is achievable by combining any standard adversarial regret minimizer with $k$ arbitrary oracle queries.

We then turn to the \emph{stochastic} setting with i.i.d.\ rewards. Here, the landscape changes significantly: we design an algorithm achieving regret $\widetilde{\mathcal{O}}(\min\{Tn/k,\sqrt{n(T-k)}\})$. This result stems from a key observation: the previous lower bound breaks in this regime because making two instances hard to distinguish requires the arm rewards to have large variance. This high variance, in turn, allows each query to yield a large negative per-round regret.

Our analysis formalize this observation by lowerbounding the expected negative regret of a query as a function of the variance of the underling distributions. This highlights the trade-off between two extremes: (i) the variance of the distributions is small, and hence a variance-aware learner can achieve regret significantly better than the standard $\mathcal{O}(\sqrt{T})$, (ii) the variance of the distributions is large, and hence, while the optimal regret in the rounds without query is the  standard $\mathcal{O}(\sqrt{T})$, the query compensate by providing a large negative regret.

Despite our analysis is fairly involved, the resulting algorithm is remarkably simple: we query the optimal arm in the first $k$ arms (ignoring the feedback), and then deploy a variance-aware regret minimizer, such as  UCB-V~\citep{audibert2009exploration},  for the remaining rounds.
Finally, we provide a matching lower bound of $\Omega(\min\{Tn/k,\sqrt{n(T-k)}\})$. Notice that the analogous lower bound for full-feedback~\citep{russo2024online} is inapplicable here as it relies on correlated rewards.

Overall, our results, summarized in Table~\ref{tab:main_results}, provide a complete characterization of the value of best-action queries under \emph{bandit feedback}, resolving the problem left open by~\citet{russo2024online}.

\begin{table}[t]
	\centering
	\footnotesize
	\setlength{\tabcolsep}{3.2pt}
	\renewcommand{\arraystretch}{1.15}
	\begin{adjustbox}{width=\columnwidth,center}
		\begin{tabular}{
				>{\raggedright\arraybackslash}p{3.0cm}
				@{\hspace{3pt}}
				c
				@{\hspace{0.8pt}}
				c
			}
			\toprule
			\textbf{Setting} & \textbf{Upper bound} & \textbf{Lower bound} \\
			\midrule
			Stoc. \emph{i.i.d.}
			& \ourresult{\widetilde{\mathcal{O}}\!\left(\min\!\left\{\frac{T}{k},\,\sqrt{T-k}\right\}\right)}
			& \ourresult{\Omega\!\left(\min\!\left\{\frac{T}{k},\,\sqrt{T-k}\right\}\right)} \\
			
			Stoc. \emph{correlated} ({Adv.})
			& $\widetilde{\mathcal{O}}\!\left(\sqrt{T-k}\right)$
			& \ourresult{\Omega\!\left(\sqrt{T-k}\right)} \\
			\bottomrule
		\end{tabular}
	\end{adjustbox}
	
	\vspace{4pt}
	\caption{Regret guarantees in MABs with best-action queries under bandit feedback.}
	\label{tab:main_results}
\end{table}

\subsection{Related Works}

In this section, we provide additional discussion on related works.

\paragraph{Hints and query-based online learning}
Best-action queries are related to the broader literature on online learning with hints. \citet{NIPS2017_22b1f2e0} study online linear optimization with hints that are correlated with the loss vectors, showing that such hints can lead to logarithmic regret over strongly convex domains. This line was extended to imperfect hints by \citet{pmlr-v119-bhaskara20a}, to sublinear hint budgets by \citet{bhaskara2021logarithmic}, and to bandit online linear optimization with hints and queries by \citet{pmlr-v202-bhaskara23a}. These works differ from ours in both the feedback structure and the nature of the side information: hints are typically vector-valued and correlated with the loss, whereas our queries return only ordinal information, namely the identity of the best arm in a single round.
A closely related query model is studied by \citet{bhaskara2023online}, where the learner may compare a small subset of actions and observe which action in the queried subset has the smallest loss before playing. Their results show that ordinal queries can lead to strong improvements in online learning and stochastic bandits. Our model is different in two key ways. First, a best-action query compares all arms rather than a small queried subset, and is therefore more informative at the single-round level. Second, and crucially, we impose a global query budget $k$, so the learner must decide how to allocate a scarce and expensive source of information over time.

\paragraph{Algorithms with predictions}
Our work also fits within the growing literature on algorithms with predictions, where machine-learned advice is used to improve performance beyond worst-case guarantees while retaining robustness when predictions are inaccurate \citep{mitzenmacher2022algorithms}. Examples include online algorithms for ski rental and scheduling \citep{NEURIPS2018_73a427ba}, caching with learned advice \citep{10.1145/3447579}, rent-or-buy problems with expert advice \citep{pmlr-v97-gollapudi19a}, and scheduling with learned weights \citep{DBLP:conf/soda/LattanziLMV20}. In contrast to much of this literature, our side information is not an exogenous prediction available at every round, but an actively requested oracle response subject to a hard budget. Thus, the main challenge is not only how to exploit side information when it is available, but also whether and when such scarce information can improve regret under partial feedback.

\paragraph{Bandits with preference feedback}
Our work is also connected to bandit models with ordinal, preference, or ranking feedback. 
In the dueling-bandit framework, the learner selects a pair of arms and observes only a noisy preference between them, rather than cardinal rewards \citep{10.1145/1553374.1553527, 10.1016/j.jcss.2011.12.028}. 
This line of work has led to algorithms based on pairwise preference estimates, such as Beat-the-Mean \citep{10.5555/3104482.3104513}, RUCB \citep{pmlr-v32-zoghi14}, and variants that target notions of optimality beyond Condorcet winners, such as Copeland winners \citep{NIPS2015_9872ed9f}. 
More general ranking-feedback models have also been studied in learning-to-rank and recommendation settings, including ranked bandits \citep{DBLP:conf/icml/RadlinskiKJ08}, cascading bandits \citep{pmlr-v37-kveton15}, top-$k$ feedback \citep{JMLR:v18:16-285}, and subset-wise ranking feedback under Plackett--Luce models \citep{pmlr-v98-saha19a}. 
Most closely related to this perspective, \citet{NEURIPS2024_936ce22b} introduce bandits with ranking feedback, where the learner receives ordinal information that allows arms to be ranked based on previous pulls, without observing numerical reward differences.
These works differ from ours in the type and timing of the ordinal information. 
In dueling-bandit and ranking-feedback models, ordinal feedback is typically the learner's primary feedback signal and is obtained repeatedly from queried pairs, subsets, ranked lists, or rankings induced by the history of past pulls. 
By contrast, in our setting the learner normally receives standard bandit rewards, and ordinal information is available only through a limited number of best-action queries.

\section{Multi-Armed Bandits With Best-Action Queries}\label{sec:preliminaries}

We consider a \emph{multi-armed bandit} (MAB) problem with $n \in \mathbb{N}$ arms and horizon $T \in \mathbb{N}$.
At each round $t \in [T]$,\footnote{Throughout the paper, we write $[a] := \{1, \ldots, a\}$ for the set of the first $a \in \mathbb{N}$ natural numbers.} the environment specifies a reward vector $X_t = (X_{t,1}, \dots, X_{t,n}) \in [0,1]^n$, the learner selects an arm (or action) $I_t \in [n]$, possibly at random, and observes the reward $X_{t,I_t}$ associated with the chosen arm. 
We study two different \emph{stochastic} settings. In the simpler \emph{i.i.d} setting, each reward $X_{t,i}$ is sampled  independently across rounds and arms from an distribution $D_i$. In the more general \emph{correlated} setting, the rewards of all the arms are sampled from a joint distribution $D$ at each round $t$, \emph{i.e.}, $X_t\sim D$. Notice that this is a special case of the \emph{adversarial} setting.

		\begin{protocol}[!htp]
			\caption{Learner-Environment Interaction}
			\label{alg:interaction}
			\begin{algorithmic}[1]
				\Require Number of available queries $k \in [T]$
				\For{$t = 1, \dots, T$}
				\State The environment selects $X_t = (X_{t,1}, \dots, X_{t,n}) \in [0,1]^n$
				\State The learner decides $Q_t \in \{0,1\}$ 
				\State $Q_t\gets Q_t \cdot \mathbb{I}\{\, \sum_{\tau=1}^t Q_\tau \le k \, \}$ 
				\If{$Q_t = 1$ }
				\State The oracle reveals $i_t^\star$
				\State The learner plays $I_t=i_t^\star$
				\Else
				\State The learner plays $I_t \in [n]$
				\EndIf
				\State The learner observes $X_{t, I_t}$
				\EndFor
			\end{algorithmic}
		\end{protocol}

In addition to the standard MAB interaction, the learner is endowed with a budget of $k \in [T]$ \emph{best-action queries}. At the beginning of each round $t \in [T]$, the learner may query the oracle. If a query is issued, the oracle reveals a reward-maximizing arm for that round, namely an arm $i_t^\star \in \arg\max_{i \in [n]} X_{t,i}$. In this case, the learner plays $I_t = i_t^\star$ and observes the corresponding reward $X_{t,i_t^\star}$.
Otherwise, the interaction proceeds as in a standard MAB: the learner selects an arm $I_t \in [n]$ and   then observes the reward $X_{t,I_t}$. The learner may issue at most $k$ queries in total over the $T$ rounds.
Formally, we let $Q_t \in \{0,1\}$ denote  whether a query is issued at round $t$, and define $\mathcal{Q} := \{t \in [T] \mid Q_t = 1\}$ as the set of rounds in which the learner queries the oracle. Notice that $|\mathcal{Q}|\le k$ must hold deterministically.
Protocol~\ref{alg:interaction} formalizes the interaction between the learner and the environment in a MAB problem with best-action queries.

The goal of the learner is to minimize the \emph{regret}. We denote the (cumulative) regret suffered by a learning algorithm $\mathfrak{A}$ as follows: 
\[
R_{T,k}(\mathfrak{A}) :=  \max_{i \in [n]} \,\, \sum_{t \in [T]} \E [X_{t,i}]  - \E \left[ \sum_{t \in [T]} X_{t,I_t} \right] \! \! ,
\]
where the expectation is taken with respect to the randomness of the algorithm and the environment.

\section{When Best-Action Queries Do Not Help: The Stochastic Correlated Setting}
\label{sec:adv_setting}

We start by providing an impossibility result for the case in which the rewards are stochastic and correlated across arms. This clearly implies that the same impossibility result holds for the more general adversarial setting. Specifically, we prove that, in contrast to the full-feedback setting~\citep{russo2024online}, best-action queries do not significantly improve the regret guarantees attainable, that is, any algorithm is guaranteed to suffer $\Omega(\sqrt{T-k})$ regret. Interestingly, this shows that correlation is sufficient to make best-action queries essentially useless.

Our impossibility result is based on building a family of two hard instances $\nu_1$, $\nu_2$. In both instances, we use a uniform random variable $W_t\sim \text{Unif}\{1,2\}$ to correlate the rewards of the two possible arms. Then, the two instances $\nu_1$, $\nu_2$ are defined as follows:
\[
\circled{\nu_1} \,\,: \,\, (X_{t,1},X_{t,2})=
\begin{cases}
	(Z_t^{+},\,Y_t), & \text{if } W_t=1,\\[2mm]
	(Z_t^{-},\,Z_t^{+}), & \text{if } W_t=2.
\end{cases}
\]
\[
\circled{\nu_2} \,\,: \,\, (X_{t,1},X_{t,2})=
\begin{cases}
	(Z_t^{+},\,Z_t^{-}), & \text{if } W_t=1,\\[2mm]
	(Y_t,\,Z_t^{+}), & \text{if } W_t=2.
\end{cases}
\]
In both instances, $Y_t\sim \text{Unif}[0,1]$ and $Z_t^{-}$, $Z_t^{+}$ are chosen as a function of $Y_t$ in order to guarantee that $Y_t <Z_t^{-} < Z_t^{+}$ holds almost surely (we defer the formal construction to the proof).

It is easy to see that the two instances differ only in which arm gets the smaller reward $Z_t^{-}$. Indeed, under $\nu_1$ this creates a slight advantage for arm $1$ in expectation, whereas under $\nu_2$ the same advantage is given to arm $2$. In the following, we denote by \( \Delta \) the gap between the expected rewards of the optimal arm and the suboptimal arms.

Our construction deals with the tension between two opposing objectives: keeping the KL between the two instances small and control the negative regret obtained by best-action queries.
The main challenge is dealing with the tradeoff between the suffered (positive) regret and the negative regret accumulated by using best-action queries. To ensure that the (negative) regret is small in magnitude, we would like the best fixed arm, \emph{i.e.}, our baseline \( i^\star \), to be optimal at every round. This idea has been used in previous work such as~\citep{gerchinovitz2016refined}. However, this is not suitable in our setting, since it would allow the learner to identify the optimal arm with a single query.

We use correlated rewards to deal with this tension.
Indeed, in our construction a query reveals the arm that is optimal at the current round and allows the learner to collect the top reward \( Z_t^{+} \), but this feedback has the same distribution under \( \nu_1 \) and \( \nu_2 \), and therefore does not help identify which arm is optimal in expectation. In contrast, on rounds without queries, the learner observes only the reward of the selected arm, and these observations are drawn from two nearby distributions whose KL divergence is of order \( \Delta^2 \). Hence, after \( m := T - k \) rounds without queries, the total information available for distinguishing \( \nu_1 \) from \( \nu_2 \) is only of order \( m \Delta^2 \). Choosing \( a = \Theta(m^{-1/2}) \) keeps the two environments statistically close, so any learner must frequently pull the wrong fixed arm on non-queried rounds, yielding regret of order \( m \cdot a = \Theta(\sqrt{T - k}) \).

Based on the above reasoning, we can now state the following lower bound.
\begin{restatable}{theorem}{advlower}\label{thm:lb_correlated}
	For every $T \ge 1$ and $k \in [T-1]$, and every randomized learning algorithm $\mathfrak{A}$ that uses $k$ best-action queries, there exists a MAB problem instance with two arms and stochastic correlated rewards such that the regret of $\mathfrak{A}$ satisfies:
	\[
	R_{T,k}(\mathfrak{A}) \ge \Omega(\sqrt{T-k}).
	\]
\end{restatable}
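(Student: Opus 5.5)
We prove Theorem~\ref{thm:lb_correlated} with the two-instance construction $\nu_1,\nu_2$ above and a Le Cam / Bretagnolle--Huber argument. Set $m:=T-k$ and let $\Delta>0$ be a gap parameter, later taken as $\Delta=c/\sqrt{m}$. We fix $Y_t\sim\mathrm{Unif}[0,1]$ and define $Z_t^{+}:=(1+Y_t)/2$ and $Z_t^{-}:=Y_t+\epsilon(1-Y_t)$ for a suitable $\epsilon\in(0,1/2)$. Then $Y_t<Z_t^{-}<Z_t^{+}$ almost surely. Under $\nu_1$ the expected reward gap between arms 1 and 2 is $\tfrac12(\E[Z^{-}]-\E[Y])=\tfrac{\epsilon}{4}$. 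We therefore take $\epsilon=4\Delta$, so that $\Delta$ is exactly the gap. By symmetry, arm 2 is better by $\Delta$ under $\nu_2$.

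\textbf{Step 1: queries carry no information and have the same value.} The per-round best arm under $\nu_1$ is $W_t$, with revealed reward $Z_t^{+}$, and the same holds under $\nu_2$. Moreover $W_t$ is uniform and independent of $Y_t$. So the pair (revealed index, observed reward) has the same law under both instances, namely $\mathrm{Unif}\{1,2\}\otimes\mathrm{Law}((1+Y)/2)$. A queried round also yields the same expected reward $\E[Z^{+}]$ in both instances.

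\textbf{Step 2: regret decomposition.} Let $N_j$ be the number of non-queried rounds in which arm $j$ is pulled. Queried rounds contribute $-(\E[Z^{+}]-\mu^\star)$ per query. This is a negative constant, but it is the same constant $\beta$ under both instances, and it is bounded by the number of queries $|\mathcal{Q}|\le k$. Non-queried rounds contribute $\Delta\,\E_{\nu}[N_{\text{wrong}}]$. Hence $R_{T,k}=\Delta\E_\nu[N_{\text{wrong}}]-\beta\,\E_\nu[|\mathcal{Q}|]$.

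The hardest step is handling the negative term. Under bandit feedback the negative regret per query is $\E[Z^{+}]-\mu^\star=\Theta(1)$, and $k$ queries could wipe out a $\sqrt m$ regret. I would first try to remove the issue in the construction. The plan is to make queries yield \emph{nonpositive} gain by choosing $Z^{+}$ so that $\E[Z^{+}]\le\mu^\star$. For example, make the per-round top reward barely above the others and shift the mass so that the optimal arm's mean equals $\E[Z^{+}]$ up to $O(\Delta)$. Concretely, let $Y$ take values in a window just below $Z^{+}$, so that $\E[Z^{+}]-\mu^\star=O(\Delta)$. Each query then gains at most $O(\Delta)$, and the gain is $k\cdot O(\Delta)=O(k/\sqrt m)$. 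If that is still too large for $k\gg m$, I would instead scale all differences by $\Delta$. Then the per-query gain and the KL are both of order $\Delta$, resp. $\Delta^2$ per sample. In that case I would need the query gain to be $O(\Delta\cdot \Delta)$ per round: the per-round top gap is $\Theta(\Delta)$ but is hit with probability $\Theta(\Delta)$. I would verify this tuning carefully, since this bookkeeping is the main obstacle.

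\textbf{Step 3: KL and conclusion.} On non-queried rounds the law of the observed reward of either arm is a mixture that differs between $\nu_1$ and $\nu_2$ only through $Z^{-}$ versus $Y$ on one half. The densities are bounded, so the per-pull KL is $O(\Delta^2)$. By the divergence decomposition, $\mathrm{KL}(P_{\nu_1},P_{\nu_2})\le C\,m\Delta^2$; queried rounds contribute zero by Step 1. Consider the event $A=\{N_2\ge m/2\}$. Bretagnolle--Huber gives $P_{\nu_1}(A)+P_{\nu_2}(A^c)\ge\tfrac12 e^{-Cm\Delta^2}$. The non-query regret in $\nu_1$ or $\nu_2$ is therefore at least $\tfrac{m\Delta}{4}e^{-Cc^2}$. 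Choosing $\Delta=c/\sqrt m$ and subtracting the negative-regret term controlled in Step 2 yields $R_{T,k}\ge\Omega(\sqrt{T-k})$ on one of the two instances.
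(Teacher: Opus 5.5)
\textbf{Gap 1: the negative regret from queries.} Your skeleton (two instances, uninformative queries, chain rule, Bretagnolle--Huber) is the paper's. However, you leave the decisive step open, and your concrete construction fails.

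With $Z^+=(1+Y)/2$, each query gains $\E[Z^+]-\mu^\star=\tfrac12(\E[Z^+]-\E[Z^-])=\tfrac18-\Delta=\Theta(1)$. So $k\gtrsim\sqrt m$ queries erase the bound.

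Making the gain nonpositive is impossible in your template. It would force $Z^+=Z^-$ a.s., and with ties the oracle's tie-breaking reveals the instance: the law of the returned index differs under $\nu_1$ and $\nu_2$.

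Your fallbacks give $O(\Delta)$ or $O(\Delta^2)$ per query, i.e.\ totals $O(k/\sqrt m)$ or $O(k/m)$. These are not $O(\sqrt m)$ once $k\gg m$ (resp.\ $k\gg m^{3/2}$), e.g.\ $k=T-1$, $m=1$, $T$ large. Yet the theorem covers every $k\in[T-1]$.

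The missing observation is that the query gain $\tfrac12(\E[Z^+]-\E[Z^-])$ and the gap $\Delta=\tfrac12(\E[Z^-]-\E[Y])$ involve different pairs of variables, so they can be tuned independently. Keep $Z^-$ at distance $\Theta(\Delta)$ above $Y$. Make $Z^+$ a strictly larger but arbitrarily small perturbation of $Z^-$, with size depending on $k$. The paper takes $Z^-=H_a^{-1}(U_t)$ and $Z^+=H_{a+\eta}^{-1}(U_t)$, where $H_c(x)=x-cx(1-x)$ and $\eta=a/(24(k+1))$. Then the total query gain is $k\eta/12\le a/288$ for every $k$.

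\textbf{Gap 2: the KL bound in Step 3 is false for your construction.} $Z^-=Y+\epsilon(1-Y)$ is uniform on $[\epsilon,1]$, while $Y$ puts mass $\epsilon$ on $[0,\epsilon)$, so the observation laws are not mutually absolutely continuous. A reward below $\epsilon$ from arm $2$ certifies $\nu_1$, and one from arm $1$ certifies $\nu_2$. Hence the per-pull term in $D_{\mathrm{KL}}(\mathbb{P}_1\|\mathbb{P}_2)$ is infinite for arm-$2$ pulls. Even for arm-$1$ pulls it is of order $\Delta$, not $\Delta^2$.

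Bounded densities are not enough; you need the two densities to have ratio $1+O(\Delta)$ everywhere. Your instance is in fact easy. Play arm $1$ until it returns a reward below $\epsilon$, which happens only under $\nu_2$, with probability $2\Delta$ per pull, and then switch to arm $2$. This incurs $O(1)$ non-query regret on both instances.

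The paper avoids this by perturbing densities rather than shifting supports. $H_c^{-1}(U_t)$ has density $1+c(2x-1)\ge 3/4$ on all of $[0,1]$, which gives a per-pull KL of at most $a^2/9$ via $\log u\le u-1$.
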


We finally remark that any adversarial no-regret algorithm which attains a regret upper bound $\widetilde{\mathcal{O}}({\sqrt{T}})$ (\emph{e.g.},~\citep{auer2002nonstochastic}) provides $\widetilde{\mathcal{O}}({\sqrt{T-k}})$ regret when endowed with best-action queries. This is because the regret incurred by any algorithm during the best-action query rounds is at most zero.

\section{Best-Action Queries in the Stochastic i.i.d.\ Setting}
\label{sec: stochastic}
In this section, we show that when the rewards are i.i.d.\ across both rounds and arms, the benefit of best-action queries improves dramatically. Specifically, the results of this section will be presented in the following order:
\begin{itemize}[topsep=0pt, leftmargin=*]
	\item In Section~\ref{sec:var}, we study the gap between the value of the optimal static policy and the optimal dynamic one, highlighting a clear connection between this gap and the variance of the instance.
	\item In Section~\ref{sec:UCB}, we present our algorithm and the main intuition behind it.
	\item In Section~\ref{sec:final}, we provide the theoretical guarantees achieved by our algorithm.
	\item In Section~\ref{sec:stoc_lower}, we prove a matching lower bound, showing that the regret guarantee achieved by our algorithm is optimal.
\end{itemize}

\subsection{The Gap Between Dynamic and Static Policies and Its Relation to Variance}\label{sec:var}

As a first step, in this section we establish a relationship between the gap separating the optimal dynamic and static policies and the variance of the rewards. To this end, we first need to introduce some additional useful notation for the stochastic i.i.d.\ setting.

For ease of presentation, we let $X_i \sim {D}_i$ denote a generic random variable drawn from the reward distribution ${D}_i$ of arm $i \in [n]$.
Then, we define $\mu_i := \E[X_i]$ and $\sigma_i^2 := \Var(X_i)$ as the mean and variance of arm $i$'s reward distribution.
Additionally, we denote with $i^\star\in \arg\max_{i \in [n]} \mu_i$ an optimal arm and with $\mu^\star= \mu_{i^\star}$ its mean reward. 
As usual, we denote by $\Delta_i := \mu^\star - \mu_i$ the suboptimality gap of arm $i\in [n]$ and we denote with $\mathcal S := \{ i \in [n] \mid \Delta_i > 0 \}$ the set of suboptimal arms.
We also define two quantities corresponding to the per-round optimal value of the optimal \emph{static} and \emph{dynamic} policies, respectively, as follows:
\[
\OPTc :=  \max_{i \in [n]} \E\left[ X_i\right] = \mu^\star,
\qquad
\OPTd := \E\!\left[\max_{i \in [n]} X_i\right] \!.
\]
Intuitively, the first policy just knows the distribution, while the second one picks an arm \emph{after} observing a sample from the distribution.
Now, we show how to relate the difference between the dynamic and static optimal values to the suboptimality gaps and the variances of the arms. 
To do that, we observe that:
\begin{equation*}
	\OPTd-\OPTc
	= \E\!\left[\max_{j \in [n]} X_j\right]-\mu_{i^\star}
	\ge \frac{1}{n}\sum_{i \in [n]} \Big(\E[\max\{X_{i^\star},X_i\}]-\mu_{i^\star}\Big).
\end{equation*}
The right-hand side of the inequality is related to the variance of the variables. Specifically,
\begin{equation*}
	\E[\max\{X_{i^\star},X_i\}]-\mu_{i^\star}
	= \frac{1}{2}\left(\E[|X_{i^\star}-X_i|]-\Delta_i \right) \quad \forall i \in S,
\end{equation*}
which follows from employing $\max\{a,b\}=(a+b+|a-b|)/2$.
As a final observation, notice that:
\begin{equation*}
	\E\left[\left|X_{i^\star}-X_i\right|\right]
	\ge \E\left[(X_{i^\star}-X_i)^2\right]
	= \Var\left(X_{i^\star}-X_i\right)+\left(\E\left[X_{i^\star}-X_i\right]\right)^2
	\ge \sigma_{i^\star}^2+\sigma_i^2 \ge \sigma_i^2,
\end{equation*}
since the rewards are independent. Define $(\,\cdot\,)_{+}\coloneqq \max\{\,\cdot\,,0\}$.
We can now combine these results to get the following lemma.
\begin{restatable}{lemma}{DeltaOPT}\label{lem:DeltaOpt-iid}
	Given a MAB instance with stochastic i.i.d.\ rewards, the following holds:
	\[
	\OPTd - \OPTc \ge \frac{1}{2n}\sum_{i \in S} \left(\sigma_i^2 - \Delta_i\right)_{+}.
	\]
\end{restatable}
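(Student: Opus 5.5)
The plan is to assemble the three observations made just before the statement, being careful about which inequalities hold and in what direction.

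\textbf{Step 1 (reduction to pairwise maxima).} For every $i\in[n]$ we have $\max_{j\in[n]} X_j \ge \max\{X_{i^\star},X_i\}$ pointwise. Averaging this over $i\in[n]$ and taking expectations gives
\[
\OPTd-\OPTc \ge \frac{1}{n}\sum_{i\in[n]}\Big(\E[\max\{X_{i^\star},X_i\}]-\mu_{i^\star}\Big).
\]
Each summand is nonnegative, since $\max\{X_{i^\star},X_i\}\ge X_{i^\star}$. We may therefore drop every $i\notin\mathcal S$. Dropping them also disposes of $i=i^\star$, whose summand is exactly $0$.

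\textbf{Step 2 (pairwise identity).} For $i\in\mathcal S$, apply $\max\{a,b\}=(a+b+|a-b|)/2$ and take expectations:
\[
\E[\max\{X_{i^\star},X_i\}]-\mu_{i^\star}=\tfrac12\big(\E|X_{i^\star}-X_i|-\Delta_i\big).
\]

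\textbf{Step 3 (variance bound, the only substantive step).} We need $\E|X_{i^\star}-X_i|\ge \sigma_i^2$. Since rewards lie in $[0,1]$, we have $|X_{i^\star}-X_i|\le 1$, and hence $|X_{i^\star}-X_i|\ge (X_{i^\star}-X_i)^2$. Taking expectations,
\[
\E\big[(X_{i^\star}-X_i)^2\big]=\Var(X_{i^\star}-X_i)+\Delta_i^2 .
\]
By independence of the arms in the i.i.d. setting, $\Var(X_{i^\star}-X_i)=\sigma_{i^\star}^2+\sigma_i^2\ge\sigma_i^2$. Independence is used only here, and it is precisely what fails in the correlated construction of Section~\ref{sec:adv_setting}.

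\textbf{Combining.} Each summand is at least $\tfrac12(\sigma_i^2-\Delta_i)$, and by Step 1 it is also nonnegative. It is therefore at least $\tfrac12(\sigma_i^2-\Delta_i)_+$. Summing over $i\in\mathcal S$ and multiplying by the prefactor $1/n$ gives exactly $\frac{1}{2n}\sum_{i\in\mathcal S}(\sigma_i^2-\Delta_i)_+$.

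The only point needing care is the positive-part bookkeeping: the bound is valid because we keep the true, nonnegative summands rather than the possibly negative lower bound $\tfrac12(\sigma_i^2-\Delta_i)$. No obstacle beyond this is expected.
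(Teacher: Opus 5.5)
Your proof is correct and follows the paper's argument essentially step for step: pointwise domination of $\max_j X_j$ by the pairwise maxima $\max\{X_{i^\star},X_i\}$, the identity $\max\{a,b\}=(a+b+|a-b|)/2$, and the chain $\E|X_{i^\star}-X_i|\ge \E[(X_{i^\star}-X_i)^2]\ge\sigma_i^2$, which uses independence of the arms. The only cosmetic difference is ordering: you restrict to $\mathcal S$ at the start and apply the positive part at the end, whereas the paper sums over $i\neq i^\star$, inserts the positive part before substituting the variance bound, and drops the non-$\mathcal S$ terms last.
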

Lemma~\ref{lem:DeltaOpt-iid} relates the difference between optimal \emph{dynamic} and \emph{static} policies, showing that this gap admits a lower bound that grows with the variance of suboptimal arms whenever the latter is sufficiently large compared to their suboptimality gap.

We now show how the gap between the dynamic and static optimal values affects the learner’s regret. Intuitively, whenever the learner queries the oracle, it receives the dynamic optimal value, while the benchmark against which regret is measured—namely, the static optimum—remains unchanged. Therefore, each query contributes exactly the gap between the dynamic and static optimal values, which yields the following decomposition of the learner’s regret.
\begin{restatable}{lemma}{KOptimalQueries}\label{lem:DeltaOpt}
	Consider any algorithm $\mathfrak{A}$ that, given as input a time horizon $T \in \mathbb{N}$ and a number $k \in [T]$ of allowed best-action queries, issues a query to the oracle for exactly $k$ rounds and, in the remaining $T-k$ rounds, ignores the oracle feedback and employs an algorithm $\mathfrak{A}_0$ for standard stochastic i.i.d. MAB problems. Then, the following holds:
	\[
	R_{T,k}(\mathfrak{A}) = R_{T-k,0}(\mathfrak{A}_0) - k(\OPTd-\OPTc).
	\]
\end{restatable}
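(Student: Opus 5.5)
The plan is to split the expected reward of $\mathfrak{A}$ into the contribution of the $k$ query rounds and that of the $T-k$ non-query rounds, and then match each piece against the benchmark. Since rewards are i.i.d., the benchmark $\max_{i}\sum_{t\in[T]}\E[X_{t,i}]$ equals $T\mu^\star = T\,\OPTc$. Writing $\mathcal{Q}$ for the set of query rounds (with $|\mathcal{Q}|=k$ deterministically, since $\mathfrak{A}$ issues exactly $k$ queries), we get
\[
R_{T,k}(\mathfrak{A}) = \Big(k\,\OPTc - \E\Big[\sum_{t\in\mathcal{Q}} X_{t,I_t}\Big]\Big) + \Big((T-k)\,\OPTc - \E\Big[\sum_{t\notin\mathcal{Q}} X_{t,I_t}\Big]\Big).
\]

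First, consider the query rounds. For $t\in\mathcal{Q}$ we have $X_{t,I_t}=\max_i X_{t,i}$. The only subtlety is that $\mathcal{Q}$ may be random and chosen adaptively by the algorithm. However, the decision $Q_t$ is made before $X_t$ is revealed and depends only on past observations and internal randomness. The vector $X_t$ is independent of that history, because it is drawn fresh and independently across rounds. Hence
\[
\E[\mathbb{I}\{t\in\mathcal{Q}\}\max_i X_{t,i}] = \Pr[t\in\mathcal{Q}]\,\OPTd.
\]
Summing over $t$ and using $\sum_t \Pr[t\in\mathcal{Q}] = \E|\mathcal{Q}| = k$ shows that the first bracket equals $k(\OPTc-\OPTd)$.

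Second, consider the non-query rounds. By assumption, $\mathfrak{A}$ ignores the oracle feedback and runs $\mathfrak{A}_0$ on these rounds, so $\mathfrak{A}_0$ sees only its own plays and rewards. Because the rewards are i.i.d. across rounds, the sequence of reward vectors on the non-query rounds, re-indexed $1,\dots,T-k$, is distributed as an i.i.d. sequence of length $T-k$ from the same instance. This holds because the rounds skipped by queries are independent of future draws. Therefore the joint law of $\mathfrak{A}_0$'s plays and rewards is exactly that of $\mathfrak{A}_0$ run on a standard MAB with horizon $T-k$, and the second bracket equals $R_{T-k,0}(\mathfrak{A}_0)$. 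Adding the two pieces gives the claimed identity.

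The main obstacle is this coupling step when the query times are adaptive. I would handle it by conditioning on the history and applying the independence of fresh draws round by round; an optional-skipping style argument then shows that the subsequence of rewards fed to $\mathfrak{A}_0$ is i.i.d. If one prefers to avoid this, one can note that the statement allows the query schedule to be fixed in advance (e.g. the first $k$ rounds), in which case the claim is immediate.
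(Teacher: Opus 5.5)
Your proof is correct and follows the same route as the paper. You split the expected reward into the $k$ query rounds, each contributing $\OPTd$ in expectation, and the $T-k$ non-query rounds, whose re-indexed observations have the law of a fresh horizon-$(T-k)$ run of $\mathfrak{A}_0$, and then compare against the benchmark $T\,\OPTc$. You also handle possibly adaptive query times explicitly, via $\E[\mathbb{I}\{t\in\mathcal{Q}\}\max_i X_{t,i}] = \Pr[t\in\mathcal{Q}]\,\OPTd$ and an optional-skipping argument, which is more careful than the paper: it asserts both facts directly from the i.i.d.\ assumption.
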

\begin{proof}
	Let $Q \subseteq [T]$ be the set of query rounds. By assumption, $|Q|=k$.
	On every queried round $t \in Q$, the learner plays an arm:
	\[
	i_t^\star \in \arg\max_{i \in [n]} X_{t,i},
	\]
	and thus the reward is
	$X_{t,I_t} = \max_{i \in [n]} X_{t,i}.$
	Since the rewards are i.i.d., for every queried round $t\in\mathcal{Q}$ we have
	$
	\mathbb E[X_{t,I_t}] = \mathbb E\!\left[\max_{i \in [n]} X_i\right] = \mathrm{OPT}_d.
	$
	Therefore,
	\begin{align}\label{eq:query1}
		\mathbb E\!\left[\sum_{t \in Q} X_{t,I_t}\right] = k\,\mathrm{OPT}_d.
	\end{align}
	
	Now consider the $T-k$  rounds without queries. By assumption, on those rounds the algorithm ignores the oracle feedback and runs the standard stochastic bandit algorithm $\mathfrak{A}_0$. Since rewards are i.i.d. across rounds, the sequence of observations collected on the non-query rounds has the same law as the sequence of observations that $\mathfrak{A}_0$ would receive in a fresh stochastic bandit instance of horizon $T-k$. Thus, we have:
	\begin{align}\label{eq:query2}
		\mathbb E\!\left[\sum_{t \notin Q} X_{t,I_t}\right]
		= (T-k)\,\mathrm{OPT}_s - R_{T-k,0}(\mathfrak{A}_0).
	\end{align}
	
	Summing Equation~\eqref{eq:query1} and Equation~\eqref{eq:query2} we get
	$
	\mathbb E\!\left[\sum_{t=1}^T X_{t,I_t}\right]
	= (T-k)\,\mathrm{OPT}_s - R_{T-k,0}(\mathfrak{A}_0) + k\,\mathrm{OPT}_d.
	$
	Thus,
	\begin{align*}
		R_{T,k}(\mathfrak{A})
		&= T\,\OPTc  - \mathbb E\!\left[\sum_{t=1}^T X_{t,I_t}\right] \\
		&= T\,\OPTc - \Big((T-k)\,\OPTc  - R_{T-k,0}(\mathfrak{A}_0) + k\,\mathrm{OPT}_d\Big) \\
		&= R_{T-k,0}(\mathfrak{A}_0) - k(\mathrm{OPT}_d-\mathrm{OPT}_s).
	\end{align*}
	This concludes the proof.
\end{proof}
Lemma~\ref{lem:DeltaOpt} shows that endowing an algorithm with $k$ best-action queries reduces its regret by an amount proportional to both $k$ and the gap $\OPTd-\OPTc$.
We can now combine Lemma~\ref{lem:DeltaOpt-iid} and Lemma~\ref{lem:DeltaOpt}. This allows to lower bound the gap between dynamic and static optimal values $\OPTd-\OPTc$, in order to get the following corollary.
\begin{corollary}
	For every $k \in [T]$, under the same assumptions of Lemma~\ref{lem:DeltaOpt}, it holds:
	\[
	R_{T,k}(\mathfrak{A}) \le R_{T-k,0}(\mathfrak{A}_0) - \frac{k}{2n}\sum_{i \in S} (\sigma_i^2 - \Delta_i)_+.
	\]
\end{corollary}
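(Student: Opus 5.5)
The corollary follows by chaining the two preceding lemmas, so the plan is a direct substitution. First, I invoke Lemma~\ref{lem:DeltaOpt}. The algorithm $\mathfrak{A}$ queries in exactly $k$ rounds and runs $\mathfrak{A}_0$ on the other $T-k$ rounds, which gives the exact identity
\[
R_{T,k}(\mathfrak{A}) = R_{T-k,0}(\mathfrak{A}_0) - k(\OPTd-\OPTc).
\]
Since $k \ge 0$, the map $x \mapsto R_{T-k,0}(\mathfrak{A}_0) - kx$ is non-increasing in $x$. Any lower bound on $\OPTd-\OPTc$ therefore becomes an upper bound on the regret.

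Second, I plug in Lemma~\ref{lem:DeltaOpt-iid}, which states that $\OPTd-\OPTc \ge \frac{1}{2n}\sum_{i\in S}(\sigma_i^2-\Delta_i)_+$. Multiplying this by $-k$ reverses the inequality. Adding $R_{T-k,0}(\mathfrak{A}_0)$ then gives the claimed bound
\[
R_{T,k}(\mathfrak{A}) \le R_{T-k,0}(\mathfrak{A}_0) - \frac{k}{2n}\sum_{i \in S}(\sigma_i^2-\Delta_i)_+.
\]

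There is no real obstacle here. The only points to check are that the i.i.d.\ assumption, which both lemmas use, is part of the hypotheses inherited from Lemma~\ref{lem:DeltaOpt}, and that the sign of $k$ is non-negative so the inequality direction is preserved. If one wanted to be careful about Lemma~\ref{lem:DeltaOpt-iid} itself, the delicate step there is dropping the $\tfrac12$ factor and the $\Delta_i$ term when passing from $\tfrac12(\E|X_{i^\star}-X_i|-\Delta_i)$ to a bound involving $(\sigma_i^2-\Delta_i)_+$. That step uses $\E|X_{i^\star}-X_i|\ge\sigma_i^2$, which relies on rewards lying in $[0,1]$. It also uses the fact that each summand $\E[\max\{X_{i^\star},X_i\}]-\mu_{i^\star}$ is non-negative, which is what allows the positive part and the restriction to $S$. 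Since that lemma is already established, the corollary needs nothing beyond the substitution above.
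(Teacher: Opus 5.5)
Your proposal is correct and follows the paper's own route: substitute the lower bound on $\OPTd-\OPTc$ from Lemma~\ref{lem:DeltaOpt-iid} into the identity of Lemma~\ref{lem:DeltaOpt}, using $k\ge 0$ to preserve the direction of the inequality. One small inaccuracy in your aside on Lemma~\ref{lem:DeltaOpt-iid}: neither the $\tfrac12$ nor the $\Delta_i$ is dropped there, since the bound keeps the factor $\tfrac{1}{2n}$ and the term $-\Delta_i$, and the only change is that $\E|X_{i^\star}-X_i|$ is replaced by the smaller quantity $\sigma_i^2$.
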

The above corollary has an important implication. Consider hard instances in which the regret of a standard stochastic MAB algorithm is of order $\Theta(\sqrt{T})$. In such instances, the arm variances remain constant, while the suboptimality gaps satisfy $\Delta_i = \Theta(1/\sqrt{T})$. Hence, for sufficiently large $T$, each term $(\sigma_i^2 - \Delta_i)_+$ is bounded away from zero, and therefore the second term in the corollary is of order $\Theta(k)$. 
In other words, in these instances, employing $k$ best-action leads to a $\Theta(k)$ improvement in the final regret guarantees.
For instance, if we set $k = T^{2/3}$ and choose UCB1~\citep{auer2002finite} as the underlying algorithm, then the negative term induced by the queries eventually dominates the usual regret term of UCB1, so that the resulting regret becomes negative for sufficiently large $T$. This suggests that access to best-action queries can dramatically improve performance.

\subsection{A Variance-Aware Regret Minimizer}
\label{sec:UCB}

In the previous section, we showed how best-action queries can significantly improve the regret guarantees when the variance-related quantities $(\sigma_i^2-\Delta_i)_+$ are large. In this section, we deal with the settings in which such quantities are small, and best-action queries are therefore \emph{not} helpful. 
In this case, we show that using a variance-aware regret minimizer in the rounds without queries leads to a small regret. Accordingly, 
we employ the UCB-V algorithm~\citep{audibert2009exploration} as a regret minimizer. For the sake of clarity, we restate here the regret guarantees of UCB-V.
\begin{theorem}[UCB-V guarantees~\citep{audibert2009exploration}]\label{thm:UCB-V}
	Consider a MAB instance with stochastic i.i.d.\ rewards in $[0,1]$, and let $\mathcal S := \{ i \in [n] \mid \Delta_i > 0 \}$
	denote the set of strictly suboptimal arms. For every horizon \( T \ge 1 \), let \( N_i(T) \) denote the number of times arm \( i \in [n] \) has been selected up to round \( T \).
	Then, for every
	$i \in \mathcal{S}$, the UCB-V algorithm guarantees:
	\[
	\Delta_i \E[N_i(T)]
	\le
	C\left(\frac{\sigma_i^2}{\Delta_i}+1\right)\log T,
	\]
	for an absolute constant $C>0$.
\end{theorem}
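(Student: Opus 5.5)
We would prove this with the standard UCB-style decomposition, run on the UCB-V index built from the empirical Bernstein inequality. For arm $i$ after $s$ pulls, let $\bar X_{i,s}$ be the empirical mean and $V_{i,s}$ the empirical variance. UCB-V plays an arm maximizing
\[
B_{i,s,t} := \bar X_{i,s} + \sqrt{\frac{2 V_{i,s}\,\zeta\log t}{s}} + \frac{3c\,\zeta\log t}{s},
\]
with constants $\zeta>1$ and $c\ge 1$. The basic tool is a Bernstein-type inequality with empirical variance, for rewards in $[0,1]$. With probability at least $1-O(t^{-\zeta'})$ it gives two facts simultaneously:
\[
|\bar X_{i,s}-\mu_i| \le \sqrt{\frac{2V_{i,s}\log t}{s}}+\frac{3\log t}{s},
\qquad
V_{i,s}\le \sigma_i^2+\sqrt{\frac{2\sigma_i^2\log t}{s}}+\frac{c'\log t}{s}.
\]
The second fact follows by applying Bernstein to the centered squares $(X_i-\mu_i)^2$, whose variance is at most $\sigma_i^2$.

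\textbf{Step 1: threshold decomposition.} Fix $i\in\mathcal S$ and a threshold $u$. Write
\[
\E[N_i(T)] \le u + \sum_{t=u+1}^{T}\Pr\big(I_t=i,\ N_i(t-1)\ge u\big).
\]
The event $\{I_t=i\}$ forces $B_{i,N_i,t}\ge B_{i^\star,N_{i^\star},t}$. Hence at least one of the following holds:
\begin{enumerate}[(a)]
\item the optimal arm's index falls below $\mu^\star$;
\item arm $i$'s index, after $s\ge u$ pulls, is at least $\mu^\star$.
\end{enumerate}

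\textbf{Step 2: event (a) is rare.} For event (a) we union-bound over the possible counts $s\le t$ of arm $i^\star$, using the empirical Bernstein inequality with the exploration function $\zeta\log t$. The per-round probability is then $\sum_{s\le t} O(t^{-\zeta})$, which is summable in $t$ when $\zeta>1$. This is where the choice $\zeta>1$ matters: naive union bounds over $s$ cost a factor $t$. If that factor is too large, I would first try a peeling argument over geometric blocks of $s$, as in the original UCB-V analysis, so that only a $\log t$ factor is lost. This contributes $O(\log T)$, or even $O(1)$, to $\E[N_i(T)]$.

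\textbf{Step 3: choosing $u$ so that event (b) is rare.} Condition on the high-probability event for arm $i$. Then
\[
B_{i,s,t}\le \mu_i + 2\sqrt{\frac{2V_{i,s}\zeta\log T}{s}}+\frac{O(\log T)}{s}.
\]
Substituting the bound on $V_{i,s}$ gives
\[
B_{i,s,t}\le \mu_i+O\!\left(\sqrt{\frac{\sigma_i^2\log T}{s}}+\frac{\log T}{s}\right).
\]
This uses $\sqrt{\sqrt{ab}}\le \sqrt a+\sqrt b$ to absorb the cross term $(\sigma_i^2\log T/s)^{1/4}\cdot(\log T/s)^{1/2}$. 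The right-hand side is below $\mu^\star=\mu_i+\Delta_i$ as soon as
\[
s\ge u:=C'\Big(\frac{\sigma_i^2}{\Delta_i^2}+\frac{1}{\Delta_i}\Big)\log T .
\]
With this $u$, event (b) only happens on the failure event, whose probability is again summable.

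\textbf{Step 4: conclusion.} Combining the three steps gives
\[
\E[N_i(T)]\le C'\Big(\frac{\sigma_i^2}{\Delta_i^2}+\frac1{\Delta_i}\Big)\log T+O(\log T).
\]
Multiplying by $\Delta_i\le 1$ yields
\[
\Delta_i\,\E[N_i(T)]\le C\Big(\frac{\sigma_i^2}{\Delta_i}+1\Big)\log T,
\]
which is the claimed bound.

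The main obstacle is the concentration step. We must control the empirical variance well enough that replacing $\sigma_i^2$ by $V_{i,s}$ costs only an additive $O(\log T/s)$, and the union bound over random pull counts must not blow up. We would handle both by taking the Bernstein bound on the squared deviations and choosing $\zeta$ large enough that $\sum_t t\cdot t^{-\zeta}$ is bounded.
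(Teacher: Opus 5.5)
Your argument is correct and is essentially the standard UCB-V analysis of Audibert, Munos and Szepesv\'ari, which the paper restates and cites without proof: empirical Bernstein for the mean, Bernstein applied to $(X_i-\mu_i)^2$ for the empirical variance, and a threshold $u\asymp(\sigma_i^2/\Delta_i^2+1/\Delta_i)\log T$.

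\textbf{The slip in Step 2.} A union bound over $s\le t$ gives a per-round failure probability of order $t^{1-\zeta}$. This is summable only for $\zeta>2$, not for all $\zeta>1$; at $\zeta=2$ it still costs only $O(\log T)$. The same factor $t$ appears when you say event (b) is ``again summable''.

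\textbf{How to repair it.} Fixing $\zeta>2$, as in your last paragraph, is enough here, because $C$ only needs to be an absolute constant for one fixed version of the algorithm. The original reaches every $\zeta>1$ with two devices:
\begin{enumerate}[(i)]
\item for the optimal arm, peeling over geometric blocks of $s$;
\item for arm $i$, noting that it is pulled at most once while $N_i=s$, and that $B_{i,s,t}\le B_{i,s,T}$. Event (b) then contributes at most $\sum_{s\ge u}\Pr(B_{i,s,T}\ge\mu^\star)=O(T^{1-\zeta})$, with no union bound over rounds.
\end{enumerate}

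\textbf{Step 4.} Absorbing the additive constants into $C\log T$ requires $T\ge 2$. At $T=1$ the right-hand side vanishes, so the bound as stated cannot hold there.
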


		\begin{algorithm}[!htp]
			\caption{UCB-V with Best-Action Queries}
			\label{alg:ucbv-probing}
			\begin{algorithmic}[1]
				\Require Number of available queries $k \in [T]$
				\For{$t = 1, \dots, k$}
				\State The oracle reveals $i_t^\star$
				\State The learner plays $I_t=i_t^\star$
				\EndFor
				\State Initialize UCB-V from scratch 
				\State Run UCB-V for the remaining $T-k$ rounds
			\end{algorithmic}
		\end{algorithm}

Intuitively, Theorem~\ref{thm:UCB-V} shows how, by employing the UCB-V algorithm, it is possible to attain regret guarantees that strongly benefit from instances characterized by low variance. For instance, when $\sigma_i^2=\Theta(\Delta_i)$, a regret upper bound of order $\log T$ can be recovered even for $\Delta_i=\Theta(1/\sqrt{T})$. 

In Algorithm~\ref{alg:ucbv-probing}, we show how to instantiate UCB-V into our setting in which the learner is endowed with best-action queries. 
Specifically, for the first $k$ rounds the learner exploits the queries to the oracle. Notice that, since the rewards are stochastic, spending the entire query budget at the beginning of the learning process yields the same expected reward as any other way of allocating the same budget over time. Thus, for simplicity, we deplete the query budget in the first rounds. Then, UCB-V is initialized from scratch, \emph{i.e.}, we discard all the information collected during the probing phase, since they are not needed to improve the regret guarantees of our algorithm. 
Finally, we run the UCB-V algorithm for the remaining $T-k$ rounds. 

\subsection{The Final Regret Bound}\label{sec:final}

In this section, we show how to combine the results of \Cref{sec:var} and \Cref{sec:UCB} to deal with the tension between the variance and the gap $\OPTd-\OPTc$. Specifically, we show that when the variance is large then $\OPTd-\OPTc$ is large, making the queries extremely useful. On the other hand, when the variance is small, a variance-aware regret minimizer as UCB-V provides a much better regret rate the the standard $\widetilde{\mathcal{O}}(\sqrt{T})$.

Formally, we prove the following theorem.
\begin{restatable}{theorem}{ThmUpperBound}\label{thm:upper}
	Fix $k \in [T]$ and let $\mathfrak{A}$ be Algorithm~\ref{alg:ucbv-probing} with input $k$. Then,
	\[
	R_{T,k}(\mathfrak{A}) \le   \mathcal{O}\left( \min \left \{\frac{nT \log T}{k}, \sqrt{nT\log T} \right \} \right).
	\]
\end{restatable}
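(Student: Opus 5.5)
Starting from Lemma~\ref{lem:DeltaOpt} and Lemma~\ref{lem:DeltaOpt-iid}, the regret of Algorithm~\ref{alg:ucbv-probing} satisfies
\[
R_{T,k}(\mathfrak{A}) \le R_{T-k,0}(\text{UCB-V}) - \frac{k}{2n}\sum_{i\in\mathcal S}(\sigma_i^2-\Delta_i)_+,
\]
and the UCB-V regret decomposes as $\sum_{i\in\mathcal S}\Delta_i\E[N_i(T-k)]$. The plan is to bound this arm by arm: show that for every suboptimal arm $i$, its positive regret contribution minus its share $\frac{k}{2n}(\sigma_i^2-\Delta_i)_+$ of the query bonus is at most $\mathcal O(\min\{T\log T/k,\sqrt{T\log T/n}\})$. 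Summing over the at most $n$ arms then gives the claim. Where it helps, I will keep the $(\sigma_i^2-\Delta_i)_+$ bonus only partially.

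For each arm $i\in\mathcal S$ I will use two bounds on $r_i:=\Delta_i\E[N_i(T-k)]$. The trivial one is $r_i\le \Delta_i T$. The one from Theorem~\ref{thm:UCB-V} is $r_i\le C(\sigma_i^2/\Delta_i+1)\log T$. I then split into cases according to $\Delta_i$ and $\sigma_i^2$.

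\emph{Case (a): $\sigma_i^2\le 2\Delta_i$.} UCB-V gives $r_i\le 3C\log T$, which is negligible.

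\emph{Case (b): $\sigma_i^2>2\Delta_i$.} Here $(\sigma_i^2-\Delta_i)_+\ge\sigma_i^2/2$, so the net contribution of arm $i$ is at most
\[
\min\Big\{\Delta_i T,\ C\log T\,\big(\tfrac{\sigma_i^2}{\Delta_i}+1\big)\Big\}-\frac{k\sigma_i^2}{4n}.
\]
I maximize this over $\sigma_i^2$ and $\Delta_i$. If $\Delta_i\le\sqrt{\log T/T}$ (up to constants), the trivial bound gives $\Delta_iT\le\sqrt{T\log T}$; since the bonus is nonnegative, this yields the $\sqrt{T\log T}$-type branch. For the $T/k$ branch, the term $C\log T\,\sigma_i^2/\Delta_i-\frac{k\sigma_i^2}{4n}$ is nonpositive whenever $\Delta_i\ge 4Cn\log T/k$. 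If instead $\Delta_i< 4Cn\log T/k$, the trivial bound gives $\Delta_iT\le 4CnT\log T/k$.

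The main obstacle is getting the per-arm bound in the right form so that summing over arms yields $nT\log T/k$ and $\sqrt{nT\log T}$, rather than an extra factor of $n$. The per-arm argument above gives $nT\log T/k$ per arm, hence $n^2T\log T/k$ overall, which is too large. To fix this, I will not split the bonus evenly across arms. Instead I will use that $\sum_i\E[N_i]\le T$ and apply Cauchy--Schwarz in the standard way: with threshold $\Delta_i\lessgtr\sqrt{n\log T/T}$, the first group contributes at most $T\cdot\sqrt{n\log T/T}=\sqrt{nT\log T}$. For arms in case (b) with large $\Delta_i$, I will compare $C\log T\,\sigma_i^2/\Delta_i$ against $\frac{k}{2n}\sigma_i^2$ arm by arm. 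Arms with $\Delta_i\ge 4Cn\log T/k$ have net contribution $\mathcal O(\log T)$. Arms with $\Delta_i<4Cn\log T/k$ jointly contribute at most $T\cdot\max_i\Delta_i\le 4CnT\log T/k$, again using that the pull counts sum to $T$.

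Taking the better of the two thresholds, together with $\mathcal O(n\log T)$ from the residual terms (which is dominated whenever the bound is nontrivial), gives
\[
\mathcal O\big(\min\{nT\log T/k,\ \sqrt{nT\log T}\}\big).
\]
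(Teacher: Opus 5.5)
Your proof is correct and has the same structure as the paper's: the decomposition via Lemma~\ref{lem:DeltaOpt} and Lemma~\ref{lem:DeltaOpt-iid}, the per-arm UCB-V bound, and the key step of charging each suboptimal arm an amount proportional to $\E[N_i(T-k)]$, so that $\sum_i \E[N_i(T-k)]\le T$ removes the spurious factor of $n$. The only difference is how the per-arm trade-off is carried out: you split at $\Delta_i \approx 4Cn\log T/k$ (above it the query bonus absorbs $C\sigma_i^2\log T/\Delta_i$, below it $\Delta_i\E[N_i]\le \frac{4Cn\log T}{k}\E[N_i]$), whereas the paper merges the two bounds with $\min\{a,b\}\le\sqrt{ab}$ and eliminates $x_i$ via $a\sqrt{x}-bx\le a^2/(4b)$, obtaining the charge $\frac{Cn}{2k}\E[N_i]\log T$ without cases. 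In a write-up, discard the preliminary $r_i\le\Delta_i T$ bound, and state explicitly the large-gap half of your $\sqrt{nT\log T}$ threshold argument, where $r_i\le C\bigl(\frac{1}{4\Delta_i}+1\bigr)\log T$.
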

\begin{proof}
	The claim is trivial when $k=T$, since the learner queries in every round and thus: 
	$R_{T,T}(\mathfrak{A})=T \cdot (\,\OPTc -\,\OPTd ) \le 0.$
	Hence, in the rest of the proof, we assume $k \le T-1$.
	We define $M := T-k$. For each $i\in \mathcal{S}$, let $N_i(M)$ denote the number
	of times arm $i \in [n]$ is pulled by the UCB-V algorithm after the initial querying phase.
	By Lemma~\ref{lem:DeltaOpt},
	\[
	R_{T,k}(\mathfrak{A})=R_{M,0}(\mathrm{\text{UCB\text{-}V}})+k(\OPTc-\OPTd).
	\]
	Since only strictly suboptimal arms contribute to the regret of the UCB-V
	phase, it holds:
	\[
	R_{M,0}({\text{UCB\text{-}V}})
	=
	\sum_{i\in\mathcal S}\Delta_i \E[N_i(M)],
	\]
	where $\E[N_i(M)]$ is the expected number of times UCB-V played arm $i$ in expectation.
	
	Notice that we can employ Lemma~\ref{lem:DeltaOpt} since our algorithm performs best-action queries in the first $k$ rounds and then runs the UCB-V algorithm after these rounds. Moreover, by Lemma~\ref{lem:DeltaOpt-iid},
	\[
	\OPTd-\OPTc
	\ge
	\frac{1}{2n}\sum_{i\in\mathcal S} (\sigma_i^2-\Delta_i)_+.
	\]
	Therefore, it holds:
	\begin{equation}\label{eq:regret_decomposition}
		R_{T,k}(\mathfrak{A})
		\le
		\sum_{i\in\mathcal S}
		\left[
		\Delta_i \E[N_i(M)]
		-
		\frac{k}{2n}(\sigma_i^2-\Delta_i)_+
		\right]\!.
	\end{equation}

	For each $i\in \mathcal{S}$, we define $R_i :=\Delta_i \E[N_i(M)] $.
	By \Cref{thm:UCB-V}, we have that:
	\[R_i\le C \left(\frac{\sigma_i^2}{\Delta_i}+1\right)\log M =  2 C \log M +  C\left(\frac{\sigma_i^2}{\Delta_i}-1\right)\log M . \]
	Let $x_i := (\sigma_i^2-\Delta_i)_+$, then we have:
	\begin{align*}
		R_i&= \min \left \{ 2 C \log M +  C\left(\frac{\sigma_i^2}{\Delta_i}-1\right)\log M, \Delta_i \E[N_i(M)] \right \}\\
		&\le 2 C \log M + \min \left \{   C\left(\frac{\sigma_i^2}{\Delta_i}-1\right)\log M, \Delta_i \E[N_i(M)] \right  \}\\
		& \le 2 C \log M + \sqrt{ C \left(\frac{\sigma_i^2}{\Delta_i}-1\right)_{+}\Delta_i \E[N_i(M)] \log M }\\
		& \le 2 C \log T + \sqrt{ C x_i \E[N_i(M)]\log T } ,
	\end{align*}
	where in the second inequality we used $\min\{a,b\}\le \sqrt{a b_{+}}$ that holds for every $a> 0$.

	Then, 
	\begin{align}\label{eq:Ri_k2n}
		R_i - \frac{k}{2n}x_i \le 2 C \log T + \sqrt{ C x_i\E[N_i(M)] \log T  } - \frac{k}{2n}x_i.
	\end{align}
	Now apply the inequality
	$
	a\sqrt{x} - bx \le {a^2}/{4b}
	$
	that holds for every $a,b > 0$ and $x \ge 0$, where we choose $a := \sqrt{C\E[N_i(M)]\log T}$ and $b := {k}/{2n}$, which gives the following bound:
	\begin{equation}\label{eq:Ri_k2n_2}
		\sqrt{C\,x_i\,\E[N_i(M)]\,\log T}
		-
		\frac{k}{2n}x_i
		\le
		\frac{Cn}{2k}\E[N_i(M)]\log T.
	\end{equation}
	
	Therefore, plugging Equation~\eqref{eq:Ri_k2n_2} into Equation~\eqref{eq:Ri_k2n}, for every $i \in \mathcal{S}$, we get:
	\[
	R_i-\frac{k}{2n}x_i
	\le
	2C\log T+\frac{Cn}{2k}\E[N_i(M)]\log T.
	\]
	Summing over all $i\in \mathcal{S}$, and recalling Equation~\eqref{eq:regret_decomposition} and the definitions of $R_i$ and $x_i$, we obtain:
	\[
	R_{T,k}(\mathfrak{A})
	\le
	2Cn\log T
	+
	\frac{Cn\log T}{2k}\sum_{i\in \mathcal{S}}\E[N_i(M)].
	\]
	The number of pulls of suboptimal arms during the UCB-V phase is at most the total number of pulls, \emph{i.e}, $\sum_{i\in\mathcal S}\E[N_i(M)] \le M \le T$. Thus, it holds:
	\[
	R_{T,k}(\mathfrak{A})
	\le
	2Cn\log T+\frac{CnT\log T}{2k} \le \frac{5C}{2}\frac{nT\log T}{k},
	\]
	since $k\le T$. To conclude the proof, we observe that UCB-V attains the same instance-independent regret upper bound as UCB1~\citep{auer2002finite}, which is of order $\mathcal{O}(\sqrt{nM\log M})$.
\end{proof}

\subsection{A Matching Lower bound}
\label{sec:stoc_lower}

In this section, we provide a matching lower bound for the stochastic i.i.d. setting. 
For clarity, we describe the core intuition using a two-arm setting, though our theorem and formal proof generalizes to an arbitrary number of arms $k$.
Our construction stems from the observation that the hardest instances arise when the problem parameters satisfy the following relationship:
\[  \sigma^2 =\Theta \left(\Delta \frac{T}{k} \right)\quad \text{and} \quad  \Delta = \Theta\left( \frac{1}{k}\right).\]
We consider two problem instances, $\nu_1$ and $\nu_2$. In $\nu_1$, the first arm follows a Bernoulli distribution with mean $1-p$, and the second arm follows a Bernoulli with mean $1-p+\Delta$. In $\nu_2$, these reward distributions are swapped.
A distinctive feature of our approach is the use of Bernoulli distributions where the mean $p \simeq 1$. While standard lower bounds often use distributions centered around $1/2$, as we will see in the following, our construction leverages the fact that the mean approaches the boundary. In principle, this makes the instance easier to learn, since the variance is smaller and the KL divergence scales as $\mathcal O(1/\Delta)$ rather than the typical $\mathcal O(1/\Delta^2)$.
Nonetheless, the motivation behind this choice is rooted in Lemma \ref{lem:DeltaOpt-iid}, which shows that high-variance distributions (like those centered at $1/2$) incur large negative regret during the querying rounds.
To see this, consider two Bernoulli distributions with means close to $1/2$. In this case, the gap between the value of optimal dynamic policy $\OPTd \simeq 3/4$ and the optimal static one $\OPTc = 1/2$ is constant. However, by shifting the distributions such that means are at least $1-p$, this gap is compressed to at most $p$, as $\OPTc = 1-p$ and $\OPTd \le 1$.
This compression is essential for matching the upper bound.

Formally we prove the following.
\begin{restatable}{theorem}{LBDue}\label{thm:lower_stoc}
	There exists an absolute constant $C$ such that, for any {randomized} algorithm $\mathfrak{A}$ which issues $k$ best-action queries with $k\leq \nicefrac{T}{C}$, it holds:
	\[  R_{T,k}(\mathfrak{A}) \ge \Omega\left(\min\left\{\frac{Tn}{k}, \sqrt{n(T-k)}\right\}\right).  \]
\end{restatable}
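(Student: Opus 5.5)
I will split into two regimes according to which term achieves the minimum, and build Bernoulli instances near the boundary $1$, as suggested by the discussion preceding the statement.

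\textbf{Regime $k \le \sqrt{nT}$.} Here the minimum is $\sqrt{n(T-k)}$, and since $k\le T/C$ we have $T-k=\Theta(T)$. I will take the standard $n$-arm instance with Bernoulli$(1/2)$ arms and one arm at $1/2+\Delta$, with $\Delta=\Theta(\sqrt{n/T})$. The difficulty is that queries could produce negative regret of order $k$, which is at most $\sqrt{nT}$; this can cancel the positive term. So I will instead use means near $1-p$: all arms Bernoulli$(1-p)$ and one arm Bernoulli$(1-p+\Delta)$. Then $\OPTd-\OPTc\le p$, so the query gain is at most $kp$. Queried rounds reveal the argmax, and I will bound the KL contribution of each such round by that of one full-feedback round, $O(n\Delta^2/p)$ divided appropriately. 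Non-queried rounds contribute $O(\Delta^2/p)$ per pull of the perturbed arm. The standard averaging over which arm is perturbed, through Pinsker or Bretagnolle--Huber, then gives positive regret $\Omega(T\Delta)$ whenever $T\Delta^2/(np)+k\Delta^2/p\lesssim 1$. I will pick $p$ constant when $k$ is small, which yields $\Delta\asymp\sqrt{n/T}$ and regret $\Omega(\sqrt{nT})-kp$. When $k$ is comparable to $\sqrt{nT}$, the threshold must be handled by shrinking $p$.

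\textbf{Regime $k\ge\sqrt{nT}$.} Here the target is $nT/k$. Following the heuristic $\Delta=\Theta(1/k)$ and $\sigma^2\approx p=\Theta(\Delta T/k)$, I will set $\Delta=c\,n/k$ and $p=c' nT/k^2\le 1$, the latter because $k\ge\sqrt{nT}$. The constraints to check are:
\begin{itemize}
\item the information from $T$ non-queried pulls, $T\Delta^2/(np)=O(1)$;
\item the information from queries, bounded by $k\cdot n\Delta^2/p\cdot(1/n)$, also $O(1)$ after the per-arm averaging;
\item the query gain, $kp=c' nT/k$, must be a small fraction of the positive regret $T\Delta=c\,nT/k$. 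This requires $c'\ll c$, in tension with the KL constraint $T\Delta^2/(np)=c^2 k^{0}\cdot T n/(k^2 p)=c^2/c'$.
\end{itemize}
This tension is the crux. Resolving it requires a sharper bound on the query gain than $p$: with independent Bernoullis near $1$, $\OPTd-\OPTc\approx \Pr[\text{best arm}=0 \text{ and some other}=1]\approx p$. So I will instead bound the \emph{difference} of query gains between instances, and control the informativeness of queries more precisely. The observation is that the argmax reveals the perturbed arm only on rounds where some arm fails, which has probability $O(np)$, and there the likelihood ratio is $1\pm\Delta/p$. This gives query KL $O(k\,p\,(\Delta/p)^2)=O(k\Delta^2/p)$.

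\textbf{Main obstacle.} Balancing constants so that the positive regret $\Omega(T\Delta)$ strictly dominates the negative query contribution $kp$ while keeping the KL bounded. I will first try to tune $c,c'$ carefully. If the two cannot coexist, I will modify the suboptimal arms: only the optimal arm carries variance, while the others are deterministic at $1-p$ via a mixture. This keeps the query gain at most $p\cdot\Pr[\text{optimal arm fails}]$, while the information scales as $\Delta^2/p$, and it should decouple the two constraints.
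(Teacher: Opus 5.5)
The overall architecture matches the paper's: two regimes, Bernoulli arms with failure probability $p$ so each query gains at most about $p$, and $\Delta\asymp n/k$, $p\asymp nT/k^2$ in the large-$k$ regime. The step that does not go through is controlling the information leaked by the queries. You bound it by $k\Delta^2/p$, either by data processing to a full-feedback round or via the ``some arm fails'' heuristic, and claim this is $O(1)$ after averaging over the perturbed arm.

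Averaging does not rescue this bound. It works for pulls because $\sum_j N_j=M$. A query, however, is a function of the whole reward vector, so the full-feedback bound charges $\Delta^2/p$ to every candidate arm simultaneously. With your parameters $k\Delta^2/p=(c^2/c')\,nk/T$, which is of order $n$ when $k=\Theta(T)$. You therefore lose exactly the factor $n$ in $nT/k$.

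The same problem appears in Regime~1. Reaching regret $\sqrt{nT}$ forces $\Delta\gtrsim\sqrt{n/T}$, hence $\Delta^2/p\gtrsim n/T$ whatever $p$ is. Then $k\Delta^2/p\lesssim 1$ requires $k\lesssim T/n$, which fails for part of the range $k\le\sqrt{nT}$ once $n^3>T$.

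No estimate that ignores the oracle can fix this. Take the regime $k=\Theta(T)$, $n\ll\sqrt{T}$, where typically at most one arm fails. There, an admissible tie-breaking rule such as ``when exactly one arm fails, return its cyclic successor'' lets each query essentially reveal which arm failed, so $k\Delta^2/p$ is then the true order.

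The missing idea is to exploit the oracle's tie-breaking order, as the paper does:
\begin{itemize}
\item Fix the priority order and perturb an arm $j$ from the lowest-priority half with $\mathbb{E}_1[N_j]\le 3M/n$. Arm $1$ sits at $1-p+\Delta$ in $\nu_1$, and arm $j$ is raised to $1-p+2\Delta$ in $\nu_2$.
\item The query output then depends on $X_{t,j}$ only when all of the at least $n/2-1$ higher-priority arms return $0$. This event has probability at most $p^{n/2-1}$, giving $k\,D_{\mathrm{KL}}(Q_1\Vert Q_2)\le\frac{2n}{5}\frac{k}{M}p^{n/2-1}\le\frac{1}{20}$.
\item Alternatively, under a priority rule the per-query KLs summed over all candidate arms are $O(\Delta^2/p)$ rather than $O(n\Delta^2/p)$. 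That also rescues your averaging argument.
\end{itemize}
The paper's Case~1 is terse on this point too, but the same device handles it.

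Separately, the ``tension'' you single out as the crux is not real. You need $c^2\lesssim c'\ll c$, which holds for $c'=c/50$ and $c$ a small constant. This is exactly the paper's choice $\Delta=n/(1000k)$, $\epsilon=M\Delta/(50k)$, $p=2\Delta+\epsilon$. The query gains over the two instances sum to $k(\Delta+2\epsilon)\le M\Delta/99+M\Delta/25$, which is beaten by $M\Delta/(4e^{5/4})$. The condition $k\le T/C$ is what keeps $\Delta\lesssim p$, so the Bernoulli KL is $O(\Delta^2/p)$.

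So no sharper bound on the query gain is needed. Your fallback would actually be harmful: if the suboptimal arms are deterministic at $1-p$, every query on which the random (optimal) arm exceeds $1-p$ returns it, so a handful of queries identify the optimal arm.

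Similarly, in Regime~1 there is no need to shrink $p$ as $k$ approaches $\sqrt{nT}$. With $\Delta\asymp\sqrt{pn/T}$ the positive regret is of order $\sqrt{p}\sqrt{nT}$, which dominates $kp\le p\sqrt{nT}$ for a single small constant $p$.
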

Theorem~\ref{thm:lower_stoc} shows that Algorithm~\ref{alg:ucbv-probing} is optimal up to logarithmic factors in the stochastic i.i.d. setting.

\bibliographystyle{plainnat}

\bibliography{example_paper.bib}

\newpage
\appendix

\section{Omitted Proofs of Section~\ref{sec:adv_setting}}

\advlower*
\begin{proof}
	We use Yao's minimax principle. Hence, it is enough to fix an arbitrary deterministic algorithm $\mathfrak{A}$
	and construct a distribution over instances on which its expected regret is
	$\Omega(\sqrt{T-k})$.
	Let $m := T-k$ be the number of rounds without queries.
	Fix parameters
	\[
	0<a<\frac14,
	\qquad
	0<\eta<\frac14-a,
	\qquad
	b:=a+\eta.
	\]
	For every $c \in [0,1/2]$, we define
	\[
	H_c(x):=x-cx(1-x), \qquad x\in[0,1].
	\]
	Since
	\[
	H_c'(x)=1+c(2x-1)\ge \frac12,
	\]
	the map $H_c$ is strictly increasing and therefore invertible on $[0,1]$.
	Let $(U_t)_{t=1}^T$ be i.i.d.\ $\mathrm{Unif}[0,1]$, let $(W_t)_{t=1}^T$ be i.i.d.\
	$\mathrm{Unif}\{1,2\}$, and define
	\[
	Y_t:=U_t,
	\qquad
	Z_t^-:=H_a^{-1}(U_t),
	\qquad
	Z_t^+:=H_b^{-1}(U_t).
	\]
	Because $a<b$, we have $H_b(x)\le H_a(x)\le x $ for all $ x\in[0,1],$	and therefore $Y_t \le Z_t^- \le Z_t^+.$
	We now define two stochastic correlated instances $\nu_1,\nu_2$:
	\[
	\nu_1:\quad
	(X_{t,1},X_{t,2})=
	\begin{cases}
		(Z_t^+,Y_t), & \text{if } W_t=1,\\
		(Z_t^-,Z_t^+), & \text{if } W_t=2,
	\end{cases}
	\]
	and
	\[
	\nu_2:\quad
	(X_{t,1},X_{t,2})=
	\begin{cases}
		(Z_t^+,Z_t^-), & \text{if } W_t=1,\\
		(Y_t,Z_t^+), & \text{if } W_t=2.
	\end{cases}
	\]
	In both instances, the arm indexed by $W_t$ is the unique best arm at round $t$.
	For every $c\in[0,1/2]$ and every $y\in[0,1]$,
	\[
	\mathbb{P}\!\left(H_c^{-1}(U_t)\le y\right)
	=
	\mathbb{P}\!\left(U_t\le H_c(y)\right)
	=
	H_c(y),
	\]
	hence $H_c^{-1}(U_t)$ has density
	\[
	f_c(x)=H_c'(x)=1+c(2x-1), \qquad x\in[0,1].
	\]
	Therefore,
	\[
	\mathbb{E}[H_c^{-1}(U_t)]
	=
	\int_0^1 x\bigl(1+c(2x-1)\bigr)\,dx
	=
	\frac12+\frac{c}{6}.
	\]
	In particular,
	$
	\mathbb{E}[Y_t]=1/2,
	\,
	\mathbb{E}[Z_t^-]=1/2+{a}/{6},
	\,
	\mathbb{E}[Z_t^+]=1/2+{b}/{6}.
	$
	Under $\nu_1$, arm $1$ has mean
	\[
	\mu_1^{(1)}
	=
	\frac12\mathbb{E}[Z_t^+]+\frac12\mathbb{E}[Z_t^-]
	=
	\frac12+\frac{a+b}{12},
	\]
	whereas arm $2$ has mean
	\[
	\mu_2^{(1)}
	=
	\frac12\mathbb{E}[Y_t]+\frac12\mathbb{E}[Z_t^+]
	=
	\frac12+\frac{b}{12}.
	\]
	Thus, under $\nu_1$, the best fixed arm is arm $1$, with gap $\Delta=\mu_1^{(1)}-\mu_2^{(1)}=\frac{a}{12}.$
	By symmetry, under $\nu_2$ the best fixed arm is arm $2$, with the same gap $\Delta=a/12$.
	
	Let $R_T^{(1)}(\mathfrak{A})$ and $R_T^{(2)}(\mathfrak{A})$ be the expected regrets of $\mathfrak{A}$ under $\nu_1$ and $\nu_2$,
	respectively.\footnote{We drop the dependence on $k$ from the definition of regret to ease the notation.}
	On a queried round, the oracle reveals the best arm index $W_t$, so the learner obtains reward
	$Z_t^+$. Since the best fixed arm has mean $1/2+(a+b)/12$, the expected regret of a queried round is
	\[
	\left(\frac12+\frac{a+b}{12}\right)-\left(\frac12+\frac{b}{6}\right)
	=
	-\frac{b-a}{12}
	=
	-\frac{\eta}{12}.
	\]
	Moreover, the queried-round feedback is exactly $(W_t,Z_t^+)$ under both $\nu_1$ and $\nu_2$, so
	queried rounds give no information for distinguishing the two instances.
	
	Let $N_1$ and $N_2$ be the numbers of times the learner pulls arms $1$ and $2$, respectively, over
	the $m=T-k$ non-queried rounds. Then $N_1+N_2=m$. On a non-queried round, pulling the wrong
	fixed arm incurs expected regret exactly $\Delta=a/12$. Hence
	\[
	R_T^{(1)}(\mathfrak{A})\ge \Delta\,\mathbb{E}_1[N_2]-\frac{\eta k}{12},
	\qquad
	R_T^{(2)}(\mathfrak{A})\ge \Delta\,\mathbb{E}_2[N_1]-\frac{\eta k}{12},
	\]
	where $\mathbb{E}_\sigma$ denotes expectation under $\nu_\sigma$.
	Now define the event $E:=\{N_1>m/2\}.$
	Since $E^c$ implies $N_2\ge m/2$, we obtain
	\[
	\mathbb{E}_1[N_2]\ge \frac{m}{2}\mathbb{P}_1(E^c),
	\qquad
	\mathbb{E}_2[N_1]\ge \frac{m}{2}\mathbb{P}_2(E).
	\]
	Therefore,
	\[
	R_T^{(1)}(\mathfrak{A})+R_T^{(2)}(\mathfrak{A})
	\ge
	\frac{\Delta m}{2}\Bigl(\mathbb{P}_1(E^c)+\mathbb{P}_2(E)\Bigr)-\frac{\eta k}{6}.
	\]
	By the Bretagnolle--Huber inequality,
	\[
	\mathbb{P}_1(E^c)+\mathbb{P}_2(E)
	\ge
	\frac12 \exp\bigl(-D_{\mathrm{KL}}(\mathbb{P}_1\|\mathbb{P}_2)\bigr),
	\]
	where $\mathbb{P}_1,\mathbb{P}_2$ are the laws of the feedback histories under $\nu_1,\nu_2$. Hence
	\begin{equation}\label{eq:regret_sum}
		R_T^{(1)}(\mathfrak{A})+R_T^{(2)}(\mathfrak{A})
		\ge
		\frac{\Delta m}{4}\exp \bigl(-D_{\mathrm{KL}}(\mathbb{P}_1\|\mathbb{P}_2)\bigr)-\frac{\eta k}{6}.
	\end{equation}
	It remains to bound the KL divergence. On a non-queried round, if the learner plays arm $1$, then
	the observation law is
	\[
	p_+(x)=\frac12 f_b(x)+\frac12 f_a(x)
	=
	1+\frac{a+b}{2}(2x-1)
	\]
	under $\nu_1$, and
	\[
	p_-(x)=\frac12\cdot 1+\frac12 f_b(x)
	=
	1+\frac{b}{2}(2x-1)
	\]
	under $\nu_2$. If the learner plays arm $2$, the two laws are swapped. Since $a+b<1/2$, both
	densities are uniformly lower bounded by $3/4$ on $[0,1]$. Moreover,
	\[
	p_+(x)-p_-(x)=a\left(x-\frac12\right).
	\]
	Using $\log u\le u-1$, we get
	\[
	D_{\mathrm{KL}}(p_+\|p_-)
	\le
	\int_0^1 \frac{(p_+(x)-p_-(x))^2}{p_-(x)}\,dx
	\le
	\frac43 a^2 \int_0^1 \left(x-\frac12\right)^2 dx
	=
	\frac{a^2}{9}.
	\]
	By the same argument,
	\[
	D_{\mathrm{KL}}(p_-\|p_+)\le \frac{a^2}{9}.
	\]
	Since queried rounds contribute zero KL and there are exactly $m$ non-queried rounds, the chain rule
	gives
	\begin{equation}\label{eq:kl_decomposition}
		D_{\mathrm{KL}}(\mathbb{P}_1\|\mathbb{P}_2)\le m\max\{D_{\mathrm{KL}}(p_+\|p_-),D_{\mathrm{KL}}(p_-\|p_+)\}
		\le \frac{ma^2}{9}.
	\end{equation}
	Now, we define:
	\[
	a:=\min\left\{\frac18,\frac{1}{\sqrt m}\right\},
	\qquad
	\eta:=\frac{a}{24(k+1)},
	\qquad
	b:=a+\eta.
	\]
	Then $0<a<1/4$, $\eta>0$, and $a+b=2a+\eta<1/2,$ so all previous bounds apply. Also,
	\[
	\frac{ma^2}{9}\le \frac19,
	\qquad
	\frac{\eta k}{6}
	=
	\frac{ak}{144(k+1)}
	\le
	\frac{a}{144}.
	\]
	Plugging these estimates and $\Delta = a/12$ into Equation~\eqref{eq:regret_sum}, we obtain:
	\begin{equation} \label{eq:regret_sum_lb}
		R^{(1)}_T(\mathfrak{A}) + R^{(2)}_T(\mathfrak{A})
		\ge \frac{am}{48} e^{-1/9} - \frac{a}{144}.
		\tag{3}
	\end{equation}
	Now recall that $a = \min\{1/8,\,1/\sqrt{m}\}$. For every $m \ge 1$, we have
	$
	am \ge {\sqrt{m}}/{8}
	\, \textnormal{and} \,
	a \le {1}/{8}.
	$
	Therefore, using Equation~\eqref{eq:regret_sum_lb}, we have:
	\[
	R^{(1)}_T(\mathfrak{A}) + R^{(2)}_T(\mathfrak{A})
	\ge \frac{e^{-1/9}}{384}\sqrt{m} - \frac{1}{1152}.
	\]
	Since ${m} \ge 1$ and  $e^{-1/9} \ge 8/9$, it follows that
	\[
	R^{(1)}_T(\mathfrak{A}) + R^{(2)}_T(\mathfrak{A})
	\ge \left(\frac{1}{432} - \frac{1}{1152}\right)\sqrt{m}
	= \frac{5}{3456}\sqrt{m}.
	\]
	Recalling that $m = T-k$, we conclude that
	\[
	R^{(1)}_T(\mathfrak{A}) + R^{(2)}_T(\mathfrak{A}) \ge \frac{5}{3456}\sqrt{T-k}.
	\]
	Therefore, under the uniform prior on $\{\nu_1,\nu_2\}$,
	\[
	\frac{R^{(1)}_T(\mathfrak{A}) + R^{(2)}_T(\mathfrak{A})}{2}
	\ge \frac{5}{6912}\sqrt{T-k}.
	\]
	By Yao's minimax principle, there exists a stochastic correlated two-armed instance on which
	every randomized learner with $k$ best-action queries suffers regret
	\[
	R_{T,k}(\mathfrak{A}) \ge \Omega(\sqrt{T-k}),
	\]
	as claimed.
\end{proof}

\section{Omitted Proofs of Section~\ref{sec: stochastic}}
\DeltaOPT*
\begin{proof}
	%
	Since
	$
	\max_{j \in [n]} X_j \ge \max\{X_{i^\star}, X_i\}$ for every $ i \in [n], $ averaging over $i\in[n]$ yields
	\begin{equation}\label{eq:deltaX}
		\max_{j \in [n]} X_j \ge \frac{1}{n}\sum_{i=1}^n \max\{X_{i^\star}, X_i\}.
	\end{equation}
	Then, we have:
	\begin{align}
		\OPTd - \OPTc
		=
		\E\!\left[\max_{j \in [n]} X_j\right] - \mu_{i^\star}
		&\ge \frac{1}{n}\sum_{i=1}^n \Bigl(\E[\max\{X_{i^\star}, X_i\}] - \mu_{i^\star}\Bigr) \nonumber \\
		&=\frac{1}{n}\sum_{i\neq i^\star} \Bigl(\E[\max\{X_{i^\star}, X_i\}] - \mu_{i^\star}\Bigr)  \nonumber \\
		&\ge\frac{1}{2n}\sum_{i\neq i^\star} \Bigl( \E[|X_{i^\star} - X_i|] - \Delta_i  \Bigr)_{+}. \label{eq:DeltaOPT_lb}
	\end{align}
	The first inequality above holds by taking expectations and subtracting $\mu_{i^\star}$ from both sides of Equation~\eqref{eq:deltaX}.
	The first equality holds since $\E[\max\{X_{i^\star}, X_{i^\star}\}] = \mu_{i^\star}$.
	The second inequality holds since $\max\{X_{i^\star}, X_i\} \ge X_{i^\star}$ point-wise, so $\E[\max\{X_{i^\star}, X_i\}] - \mu_{i^\star} \ge 0$ and using 
	\[\max\{a,b\} = \frac{a+b}{2} + \frac{|a-b|}{2}.\]
	Since $X_{i^\star}, X_i \in [0,1]$, we have $|X_{i^\star} - X_i| \in [-1,1]$. Thus, using the i.i.d. assumption, we get:
	\begin{equation}\label{eq:sigma_lb}
		\E[|X_{i^\star} - X_i| ] \ge \E[(X_{i^\star} - X_i)^2]
		=
		\Var(X_{i^\star} - X_i) + \bigl(\E[X_{i^\star} - X_i]\bigr)^2
		\ge
		\sigma_{i^\star}^2 + \sigma_i^2 
		\ge
		\sigma_i^2.
	\end{equation}
	Plugging Equation~\eqref{eq:sigma_lb} into Equation~\eqref{eq:DeltaOPT_lb}, we get:
	\[
	\OPTd - \OPTc
	\ge
	\frac{1}{2n}\sum_{i \neq i^\star } (\sigma_i^2-\Delta_i)_+
	\ge
	\frac{1}{2n}\sum_{i \in \mathcal{S}} (\sigma_i^2-\Delta_i)_+,
	\]
	concluding the proof.
\end{proof}

\subsection{Lower Bound}

\LBDue*

	\begin{proof}
		Let $M:=T-k$. Since $k\le T/100$, we have $M\ge 99T/100$ and, in particular,
		$M\ge 99k$. We prove the result by Yao's principle. Hence, it is enough to fix
		an arbitrary deterministic algorithm $\mathfrak{A}$ and construct a distribution over
		instances on which its expected regret is large.
		
		We split the proof into two cases.
		
		\paragraph{Case 1: $k\le \sqrt{nT}$.}
		In this case, since $M\ge 99T/100$, it is enough to prove a lower bound of order
		$\sqrt{nM}$. Consider the standard stochastic bandit lower-bound construction
		with Bernoulli rewards, where one arm has mean $1-p+\Delta$ and all the other
		arms have mean $1-p$, for a sufficiently small absolute constant $p\in(0,1)$
		and $\Delta = \Theta( \sqrt{{n}/{M}}).$
		
		The usual stochastic bandit lower bound gives regret
		$\Omega(\sqrt{nM})$ on the $M$ non-query rounds. On each query round, the regret
		can be negative, but its magnitude is at most the gap between the dynamic and
		static optimum, which is at most $p$. Therefore, the total negative contribution
		of all query rounds is at most $kp$. Since $k\le \sqrt{nT}$ and $M\ge 99T/100$,
		choosing $p$ as a sufficiently small absolute constant:
		\begin{align*}
			R_{T,k}(A)
			&\ge
			\Omega(\sqrt{nM}) - kp\\
			&=
			\Omega(\sqrt{nM}).
		\end{align*}
		Since in this regime, it holds:
		\[
		\min\left\{\frac{Tn}{k},\sqrt{n(T-k)}\right\}
		=
		\Theta(\sqrt{nM}),
		\]
		the desired lower bound follows.
		
		\paragraph{Case 2: $k\ge \sqrt{nT}$.}
		We show a lower bound of order $Tn/k$. Since $M=\Theta(T)$, it is enough to
		prove a lower bound of order $Mn/k$.
		Fix the tie-breaking rule of the oracle. Let $J$ be the set of the
		$\lfloor n/2\rfloor$ arms with lowest tie-breaking priority. Since the learner
		pulls one arm in each non-query round, $\sum_{i\in J}\mathbb{E}_1[N_i]\le M.$
		Therefore, there exists an arm $j\in J$ such that $\mathbb{E}_1[N_j]\le {3M}/{n}.$
		Here $N_j$ denotes the number of times arm $j$ is pulled in the $M$ non-query
		rounds, and $\mathbb{E}_1$ is expectation under the first instance.
		
		We set the following constant:
		\[
		\Delta := \frac{n}{1000k},
		\qquad
		\epsilon := \frac{M\Delta}{50k},
		\qquad
		p:=2\Delta+\epsilon.
		\]
		Since $k\ge \sqrt{nT}$ and $k\le T/100$, the above parameters satisfy
		$p\le 1/4$ and $1-p+2\Delta\le 1$ up to changing only universal constants.
		
		Consider the base instance $\nu_1$ in which arm $1$ has reward distribution $\operatorname{Be}(1-p+\Delta),$ while every arm $i\neq 1$ has reward distribution
		$\operatorname{Be}(1-p).$
		Thus arm $1$ is the unique optimal arm in $\nu_1$, with gap $\Delta$.
		Now define a second instance $\nu_2$, identical to $\nu_1$ except that arm $j$
		has reward distribution $\operatorname{Be}(1-p+2\Delta).$
		Thus arm $j$ is the unique optimal arm in $\nu_2$, again with gap $\Delta$ over
		arm $1$.
		
		Let $R_T^{(1)}(\mathfrak{A})$ and $R_T^{(2)}(\mathfrak{A})$ be the regret of $\mathfrak{A}$ under $\nu_1$ and
		$\nu_2$, respectively. On query rounds, the learner receives the dynamic
		optimum. Under $\nu_1$, the static optimum is $1-p+\Delta$, so each query
		contributes regret at least $-(p-\Delta)$. Under $\nu_2$, the static optimum is
		$1-p+2\Delta$, so each query contributes regret at least $-(p-2\Delta)$.

		Let $E:=\{N_1\le M/2\}.$ On $\nu_1$, if $E$ occurs, the learner pulls suboptimal arms at least $M/2$
		times in the non-query rounds. Hence:
		\begin{align} \label{eq1:lower}
			R_T^{(1)}(\mathfrak{A})
			\ge
			-k(p-\Delta)
			+
			\frac{M\Delta}{2}\mathbb{P}_1(E).
		\end{align}
		On $\nu_2$, if $E^c$ occurs, the learner pulls arm $1$ at least $M/2$ times,
		and arm $1$ is suboptimal by gap $\Delta$. Thus,
		\begin{align} \label{eq2:lower}
			R_T^{(2)}(\mathfrak{A})
			\ge
			-k(p-2\Delta)
			+
			\frac{M\Delta}{2}\mathbb{P}_2(E^c).
		\end{align}
		Summing Equations~\eqref{eq1:lower}--\eqref{eq2:lower} and applying Bretagnolle--Huber, we get:
		\begin{align}
			R_T^{(1)}(\mathfrak{A})+R_T^{(2)}(\mathfrak{A})
			&\ge
			-k(\Delta+2\epsilon)
			+
			\frac{M\Delta}{2}
			\bigl(\mathbb{P}_1(E)+\mathbb{P}_2(E^c)\bigr) \nonumber\\
			&\ge
			-k(\Delta+2\epsilon)
			+
			\frac{M\Delta}{4}
			\exp\bigl(-D_{\mathrm{KL}}(\mathbb{P}_1\Vert\mathbb{P}_2)\bigr) \label{eq_reg_lower}.
		\end{align}
		It remains to bound the KL divergence between the two feedback histories. By
		the chain rule for KL divergences, it is easy to see that:
		\[
		D_{\mathrm{KL}}(\mathbb{P}_1\Vert\mathbb{P}_2)
		\le
		kD_{\mathrm{KL}}(Q_1\Vert Q_2)
		+
		\mathbb{E}_1[N_j]\,
		D_{\mathrm{KL}}\bigl(\operatorname{Be}(1-p),\operatorname{Be}(1-p+2\Delta)\bigr),
		\]
		where $Q_1,Q_2$ are the distributions of the feedback received in one query
		round under $\nu_1,\nu_2$.
		
		First, for the bandit feedback term, since:
		\[
		D_{\mathrm{KL}}(\operatorname{Be}(1-p),\operatorname{Be}(1-p+2\Delta))
		=
		D_{\mathrm{KL}}(\operatorname{Be}(p),\operatorname{Be}(p-2\Delta))
		=
		D_{\mathrm{KL}}(\operatorname{Be}(p),\operatorname{Be}(\epsilon)),
		\]
		and $p-\epsilon=2\Delta$, the standard Bernoulli KL bound gives:
		\[
		D_{\mathrm{KL}}(\operatorname{Be}(p),\operatorname{Be}(\epsilon))
		\le
		\frac{8\Delta^2}{\epsilon}
		=
		\frac{400k\Delta}{M}
		=
		\frac{2}{5}\frac{n}{M}.
		\]
		By the definition of $\epsilon=M\Delta/50k$. Therefore, it holds
		\begin{equation} \label{eq3_lower}
			\mathbb{E}_1[N_j]\,
			D_{\mathrm{KL}}\bigl(\operatorname{Be}(1-p),\operatorname{Be}(1-p+2\Delta)\bigr)
			\le
			\frac{3M}{n}\cdot \frac{2}{5}\frac{n}{M}
			=
			\frac{6}{5}.
		\end{equation}
		We now bound the query term. Since $j$ belongs to the lower half of the
		tie-breaking order, there are at least $n/2-1$ arms with higher priority than
		$j$. If at least one of these arms obtains reward $1$, then the oracle never
		returns $j$, independently of whether the instance is $\nu_1$ or $\nu_2$.
		Therefore, the query feedback can differ under $\nu_1$ and $\nu_2$ only on the
		event that all these higher-priority arms obtain reward $0$, otherwise the accociated KL is $0$. This event,which will denote as $E$, has
		probability at most $p^{n/2-1}$.
		
		Conditioned on $E$, the only remaining difference between $\nu_1$ and
		$\nu_2$ is the distribution of arm $j$, which changes from
		$\operatorname{Be}(1-p)$ to $\operatorname{Be}(1-p+2\Delta)$. Therefore the
		conditional KL divergence is at most the KL divergence between the two possible
		laws of a single draw from arm $j$:
		\[
		D_{\mathrm{KL}}\bigl(Q_1(\cdot\mid E)\Vert Q_2(\cdot\mid E)\bigr)
		\le
		D_{\mathrm{KL}}\bigl(\operatorname{Be}(1-p),\operatorname{Be}(1-p+2\Delta)\bigr).
		\]
		Since the event occurs with probability at most $p^{n/2-1}$, we get:
		\[
		D_{\mathrm{KL}}(Q_1\Vert Q_2) \le p^{n/2-1}D_{\mathrm{KL}}\bigl(Q_1(\cdot\mid E)\Vert Q_2(\cdot\mid E)\bigr) 
		\le
		p^{n/2-1}\,
		D_{\mathrm{KL}}\bigl(\operatorname{Be}(1-p),\operatorname{Be}(1-p+2\Delta)\bigr).
		\]
		Using the previous KL bound,
		\[
		D_{\mathrm{KL}}(Q_1\Vert Q_2)
		\le
		p^{n/2-1}\frac{2}{5}\frac{n}{M}.
		\]
		Thus, since $M\ge 99k$ and $p\le 1/4$, we get the final bound:
		\begin{equation}\label{eq4_lower}
			kD_{\mathrm{KL}}(Q_1\Vert Q_2)
			\le
			\frac{2n}{5}\frac{k}{M}p^{n/2-1}
			\le
			\frac{2n}{495}p^{n/2-1}
			\le
			\frac{1}{20},
		\end{equation}
		where the last inequality holds for all $n\ge 4$, while the cases $n<4$ are
		absorbed into the constants.
		Combining Equation~\eqref{eq3_lower} and Equation~\eqref{eq4_lower}, we get:
		\[
		D_{\mathrm{KL}}(\mathbb{P}_1\Vert\mathbb{P}_2)
		\le
		\frac{6}{5}+\frac{1}{20}
		=
		\frac{5}{4}.
		\]
		Therefore,
		$    \exp\bigl(-D_{\mathrm{KL}}(\mathbb{P}_1\Vert\mathbb{P}_2)\bigr)
		\ge
		e^{-5/4}.
		$
		Plugging this into Equation~\eqref{eq_reg_lower} gives:
		\begin{align*}
			R_T^{(1)}(\mathfrak{A})+R_T^{(2)}(\mathfrak{A})
			&\ge
			-k(\Delta+2\epsilon)
			+
			\frac{M\Delta}{4e^{5/4}}\\
			& = -k\Delta-\frac{M\Delta}{25}
			+
			\frac{M\Delta}{4e^{5/4}}\\
			& \geq -\frac{M\Delta}{99}-\frac{M\Delta}{25}
			+
			\frac{M\Delta}{4e^{5/4}}\\
			& \geq -\frac{M\Delta}{20}
			+
			\frac{M\Delta}{4e^{5/4}}\\
			& = \Omega(M\Delta),
		\end{align*}
		where we used $M\ge 99k$.
		Since $\Delta=n/(1000k)$ and $M=\Theta(T)$, we obtain:
		\[
		R_T^{(1)}(\mathfrak{A})+R_T^{(2)}(\mathfrak{A})
		=
		\Omega\left(\frac{Tn}{k}\right).
		\]
		Thus, under the uniform prior over $\nu_1,\nu_2$, there exists one of the two
		instances on which
		$    R_{T,k}(\mathfrak{A})
		=
		\Omega(\nicefrac{Tn}{k}).
		$
		Combining the two cases, it holds:
		\[
		R_{T}(\mathfrak{A})
		\ge
		\Omega\left(
		\min\left\{
		\frac{Tn}{k},
		\sqrt{n(T-k)}
		\right\}
		\right),
		\]
		which concludes the proof.
	\end{proof}

\end{document}